\tikzstyle{output} = [rectangle, rounded corners, minimum width=1cm, minimum height=1cm,text centered, draw=black, fill=gray!30]
\tikzstyle{hidden} = [circle, minimum width=0.3cm, minimum height=1cm, text centered, draw=black, fill=gray!100]
\tikzstyle{input} = [diamond, minimum width=0.3cm, minimum height=.25cm, text centered, draw=black, fill=gray!30]
\tikzstyle{arrow} = [thick,->,>=stealth]
\tikzstyle{rcircle} =[circle, minimum width=0.05cm, fill=red!60]
\newcommand{\vect}[1]{\boldsymbol{#1}}
\newcommand{\defeq}{\overset{\text{def}}{=}}
\newcommand{\cU}{\mathcal{U}}
\newcommand{\cV}{\mathcal{V}}
\newcommand{\cH}{{\mathcal{H}}_f}
\newcommand{\cNN}{f}
\newtheorem{definition}{Definition}
\newtheorem{theorem}{Theorem}
\newtheorem{lemma}{Lemma}
\newtheorem{proposition}{Proposition}
\newtheorem{corollary}{Corollary}
\DeclareMathOperator*{\argmax}{arg\,max}
\icmltitlerunning{Bounds on the Approximation Power of Feedforward Neural Networks}
\begin{document}

\twocolumn[
\icmltitle{Bounds on the Approximation Power of Feedforward Neural Networks}



\begin{icmlauthorlist}
\icmlauthor{Mohammad Mehrabi}{to}
\icmlauthor{Aslan Tchamkerten}{goo}
\icmlauthor{Mansoor I. Yousefi}{goo}
\end{icmlauthorlist}

\icmlaffiliation{to}{Department of Electrical Engineering, Sharif University, Iran}
\icmlaffiliation{goo}{Department of Communications and Electronics, Telecom ParisTech, France}

\icmlcorrespondingauthor{Mohammad Mehrabi}{mohamadmehrabi4@gmail.com}

\icmlkeywords{Neural networks, approximation power}

\vskip 0.3in
]



\printAffiliationsAndNotice{} 

\begin{abstract}

The approximation power of general feedforward neural networks with piecewise linear activation functions is investigated. First, lower bounds on the size of a network are established in terms of the approximation error and network depth and width. These bounds improve upon state-of-the-art bounds for certain classes of functions, such as strongly convex functions. Second, an upper bound is established on the 
difference of two neural networks with identical weights but different activation functions. 

\end{abstract}

\section{Introduction}
\label{introduction}

It is well-known that 
sufficiently large  multi-layer feedforward networks can approximate any function with desired accuracy~\cite{7}. 
An important problem then is to determine the smallest neural network for a given task and accuracy.  
The standard guideline is the approximation power (variously known as expressiveness) of the network which quantifies the size of the neural network, typically in terms of depth and width, in order to approximate a class of functions within a given error. 
In particular, several works
provided evidence that deeper networks perform better than shallow ones, given a fixed number of hidden units \cite{delalleau2011shallow,Pascanu+et+al-ICLR2014b,bianchini2014complexity,2,3,8,1,5}.\footnote{For a nice counterexample see \cite{DBLP:conf/nips/LuPWH017}.} 

A popular activation function is the rectified linear unit (ReLU), partly  because of its low complexity when coupled with backpropagation training \cite{14}. It has, therefore, become of interest to determine the power of neural networks with ReLU's and, more generally, with 
piecewise linear activation functions. 

Determining the capacity of a neural networks with a piecewise linear activation function typically involves two steps. 
First, evaluate the number of linear pieces (or break points) that the network can produce and, second, tie this number to the approximation error. The works  \cite{Pascanu+et+al-ICLR2014b,18} recently showed that a linear increase in depth results in an exponential growth in the number of linear pieces as opposed to width which results only in a polynomial growth. Accordingly, the approximation capacity exhibits a similar tradeoff between depth and width. For related works with respect to classification error see \cite{2,3} and with respect to function approximation error see \cite{8,1,5}.

In this paper we consider general feedforward neural networks with piecewise linear activation functions and establish bounds on the size of the 
network in terms of the approximation error, the depth $d$, the width, and the dimension of the input space to approximate a given function. We first establish an improved upper bound on the number of break points that such a network can produce which is a multiplicative factor $d^d$ smaller than the currently best known from~\cite{5}. This upper bound is obtained by investigating neuron state transitions as introduced in \cite{6}.
Combining this upper bound with lower bounds in terms of error and dimension, we obtain necessary conditions on the depth, width, error, and dimension for a neural network to approximate a given function. These bounds  significantly improve on the 
corresponding state-of-the-art bounds for certain classes of functions (Theorems~\ref{theorem1},\ref{theorem2} and Corollaries~\ref{weak},\ref{corollary1},\ref{corollary2}). 

The second contribution of the paper (Theorem~\ref{theorem3}) is an upper bound on the difference of two neural networks with identical weights but different activation functions. This problem is related to ``activation function simulation'' investigated in \cite{16} which leverages network topology to compensate a change in activation function. 

The paper is organized as follows. In Section~\ref{prelim} we briefly introduce the setup. In Section~\ref{mainresults} we present the main results which are then compared with the corresponding ones  in the recent literature in Section~\ref{comparison}. Finally, Section~\ref{analysis} contains the proofs.

\section{Preliminaries}\label{prelim}
 Throughout the paper $\mathcal{R}$ denotes a compact  convex set in $\mathbb{R}^n$, $n\geq 1$, and  ${\mathbb{F}}_\sigma$ denotes the set of feedforward neural networks with input $\mathcal{R}$, output $\mathbb{R}$, and  activation function $\sigma: {\mathbb{R}}\rightarrow {\mathbb{R}}$.
Feedforward here refers to the fact that the neural network contains no cycles;  connections are allowed between non-neighbouring layers. It is assumed that $\sigma$ is a piecewise linear (not necessarily continuous) function with $t\geq 1$ linear pieces. The set of all such activation functions is denoted by $\Sigma_t$.
 
 A neural network $f\in\mathbb{F}_\sigma$ consists of a set of input units ${\mathcal{I}}_f$, a set of hidden units $\cH$ that operate according to~$\sigma$, non-zero weights representing connections, and a single output unit which just weight-sums its inputs. To simplify the notation we use $f$ to represent both a neural network and the function that it represents.
 
 For instance, in the neural network shown in Fig.~\ref{fig1}, we have $\mathcal{I}_f=\{x_1,x_2,x_3\}$ and $\cH=\{u_{ij}, ~ \forall i,j\}$.

 \begin{definition}[Depth and width]\label{dpth}
Given a neural network $\cNN\in {\mathbb{F}}_\sigma $, the depth of a hidden unit $h \in \cH $, denoted as $d_f(h)$, is the length of the longest path from any $i\in {\mathcal{I}}_f$ to $h$. The depth of $f$ is 
 \begin{equation*}
d_f \defeq \max \Big \{d_f(h) \big | h \in \cH   \Big \}.
 \end{equation*}
 The set of hidden units with depth $i$ is
 \begin{equation*} 
 \cH^i \defeq\Big \{ h \in \cH  \big | d_f(h)=i   \Big\} .
 \end{equation*}
 The width of the network is
 \begin{align}\label{ell}
 \omega_f&\defeq \frac{|{\cH }| }{d_f}\defeq \frac{\sum_{i=1}^{d_f}\omega_i}{d_f} 
 \end{align}
 where $$\omega_i\defeq |\cH^i|.$$
 \end{definition}
 
 For instance, in Fig.~\ref{fig1}, the hidden unit $u_{23}$ can be reached by inputs $x_1$ and $x_3$, by following the paths $x_1\rightarrow u_{23} $, $x_3\rightarrow u_{11}\rightarrow u_{23}$, or $x_3\rightarrow u_{12}\rightarrow u_{23}$. Therefore, $d_f(u_{23})=2$. The hidden units of maximum depth are $u_{31}$, $u_{32}$, and $u_{33}$ and hence $d_f=3$, $\cH^3=\{u_{31},u_{32},u_{33} \}$ and $\omega_f=8/3$. 

The following simple inequality is frequently used in the paper.
\begin{lemma}\label{bineq}
For any $t\geq 1$, $d_f\geq 1$, and $|\cH|\geq 1$
$$((t-1)\omega_f+1)^{d_f}\leq t^{|\cH|}.$$
\end{lemma}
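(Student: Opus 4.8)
The plan is to reduce the bound to a one-dimensional inequality and then invoke Bernoulli's inequality. Writing $s\defeq\omega_f=|\cH|/d_f$, the right-hand side factors as $t^{|\cH|}=t^{s\,d_f}=\big(t^{s}\big)^{d_f}$, so the claim reads
\begin{equation*}
\big((t-1)s+1\big)^{d_f}\leq\big(t^{s}\big)^{d_f}.
\end{equation*}
Both bases are positive, since $t\geq 1$ gives $(t-1)s+1\geq 1$ and $t^{s}>0$; taking $d_f$-th roots it is therefore equivalent to the scalar inequality
\begin{equation*}
1+(t-1)s\leq t^{s}.
\end{equation*}

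First I would record that the network structure forces $\omega_f\geq 1$, equivalently $|\cH|\geq d_f$. Indeed, a hidden unit $h$ of maximal depth $d_f$ is the endpoint of a path $i\to h_1\to\cdots\to h_{d_f}=h$ with $i\in{\mathcal{I}}_f$, and each $h_j$ on this path has depth exactly $j$: it has depth at least $j$ by the initial segment, and any strictly longer path to $h_j$ could be concatenated with the remaining edges to yield a path to $h$ of length greater than $d_f$, contradicting $d_f(h)=d_f$ (in a feedforward net the concatenation has no repeated vertices). Hence $\cH^1,\dots,\cH^{d_f}$ are all non-empty, so $\omega_j\geq 1$ and $|\cH|=\sum_{j}\omega_j\geq d_f$.

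With $s\geq 1$ established, the scalar inequality is precisely Bernoulli's inequality $(1+x)^{r}\geq 1+rx$ (valid for $x\geq -1$ and $r\geq 1$), applied with $x=t-1\geq 0$ and $r=s$, which gives $t^{s}=(1+(t-1))^{s}\geq 1+(t-1)s$. Raising both positive sides to the power $d_f$ then restores the displayed two-variable bound, and the degenerate case $t=1$ is the identity $1\leq 1$.

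The step I would watch most carefully is the hypothesis $s\geq 1$: for $s<1$ the exponential $t^{s}$ lies below its chord on $[0,1]$ and the inequality reverses, so the bound can genuinely fail — for instance $(t,d_f,|\cH|)=(2,2,1)$ gives $(3/2)^{2}=9/4>2$. The argument thus relies on $|\cH|\geq d_f$ rather than merely $|\cH|\geq 1$, which is exactly why I would make the non-emptiness of every depth class explicit rather than treat it as automatic. Everything else is a single invocation of a standard inequality.
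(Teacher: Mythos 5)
Your proof is correct. The paper's own argument is a one-liner in a slightly different direction: it observes that for fixed $|\cH|$ the map $d\mapsto\big((t-1)|\cH|/d+1\big)^{d}$ is non-decreasing and that $d_f\le|\cH|$, so the left-hand side is at most its value at $d=|\cH|$, namely $\big((t-1)+1\big)^{|\cH|}=t^{|\cH|}$. You instead take $d_f$-th roots and invoke Bernoulli's inequality, $t^{s}=(1+(t-1))^{s}\ge 1+(t-1)s$ for $s=\omega_f\ge1$. The two routes are equivalent in substance---the monotonicity of $(1+c/d)^{d}$ is itself usually proved via Bernoulli or AM--GM, and both arguments hinge on the same structural fact $d_f\le|\cH|$---but yours is the more self-contained: the paper asserts $d_f\le|\cH|$ without comment, whereas you derive it from a longest path (in fact showing every depth class $\cH^{1},\dots,\cH^{d_f}$ is non-empty, which is more than needed, since the $d_f$ distinct hidden units on a single longest path already give $|\cH|\ge d_f$). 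Your closing observation is also a genuine catch: the hypotheses as literally stated ($t\ge1$, $d_f\ge1$, $|\cH|\ge1$) do not imply the inequality, as your example $(t,d_f,|\cH|)=(2,2,1)$ shows; the lemma is really a statement about networks, where $\omega_f\ge1$ is automatic, and both proofs use this silently.
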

\begin{proof}
Set $\omega_f=\frac{|{\cH }| }{d_f}$ and observe that 
$$\left((t-1)\frac{|{\cH }| }{d_f}+1\right)^{d_f}$$
is a non-decreasing function of $d_f$ and that $d_f\leq |\cH|$. 
 \end{proof}

  \begin{center}    
 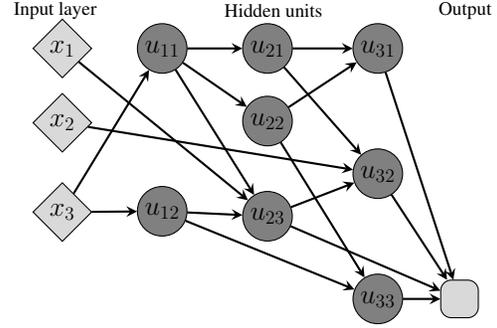
\begin{figure}[t] 
 \centering
 \begin{tikzpicture}[node distance=4cm,scale =0.5, every node/.style={scale=0.5}][h]  
 \node(in1) [input,right=2cm] {\huge$x_1$};
 \node(in2) [input, below = 0.2cm of in1] {\huge $x_2$};
 \node(in3) [input, below= 0.4cm of in2]{\huge$x_3$};
 \node(h1) [hidden, right =2cm of  in1]{\huge$u_{21}$} ;
 \node(h12) [hidden, below = 0.3cm of h1] {\huge$u_{22}$};
 \node(h13) [hidden, below= 0.6cm of h12]{\huge$u_{23}$};
 \node(h0) [hidden, right= 0.6cm of in1]{\huge$u_{11}$} ;
 \node(h02) [hidden, below = 1.5cm of h0] {\huge$u_{12}$};
 \node(h2) [hidden, right= 0.8cm of h1]{\huge$u_{31}$};
 \node(h22) [hidden, below = 1cm of h2] {\huge$u_{32}$};
 \node(h23) [hidden, below= 1cm of h22]{\huge$u_{33}$};
 \node[circle, label=left:\Large{Input layer}] (n1) at (6,1) {};
 \node[circle, label=left:\Large{Hidden units}] (n2) at (12,1) {};
 \node[circle, label=left:\Large{Output}] (n3) at (16.5,1) {};
 \node(out1) [output,right=0.5cm of h23]{};
 \draw[arrow] (h0)--(h1);
 \draw[arrow] (in2) edge (h22);
 \draw[arrow] (h0)--(h12);
 \draw[arrow] (h02)--(h23);
 \draw[arrow] (h0) edge (h13);
 \draw[arrow] (in1)--(h13);
 \draw[arrow] (h2)--(out1);
 \draw[arrow] (h22)--(out1);
 \draw[arrow] (h23)--(out1);
 \draw[arrow] (h1)--(h2);
 \draw[arrow]  (h12)--(h2);
 \draw[arrow] (in3)--(h02);
 \draw[arrow] (h02)--(h13);
 \draw[arrow] (in3)--(h0);
 \draw[arrow]  (h13) edge (out1);
 \draw[arrow]  (h1)--(h22);
 \draw[arrow]  (h13)--(h22);
 \draw[arrow]  (h12)-- (h23);
 \end{tikzpicture}
 \caption{A feedforward network $f$ with $ |{\mathcal{I}}_f|=3$ inputs, $|{\cH }| =8$ hidden units, depth $d_f=3$, 
 and width $\omega_f=8/3$. }
 \label{fig1}
 \end{figure}
 \end{center}

   \begin{definition}[Affine $\varepsilon$-approximation] \label{def4}
Function $\cNN\in {\mathbb{F}}_\sigma $ is an affine $\varepsilon$-approximation of a function 
$g:\mathcal{R} \rightarrow \mathbb{R}$ if 
 \begin{equation*}
     \sup\limits_{\vect{x}  \in \mathcal{R}}^{} |f(\vect{x} )-g(\vect{x} )| \leq \varepsilon.   
 \end{equation*}
  \end{definition}
  
\begin{definition}[Break point]
 Given $(\vect{x},\vect{y}) \in \mathcal{R}^2$, function $f:\mathcal{R} \rightarrow \mathbb{R}$ admits a \emph{break point} at $\alpha_0\in (0,1)$ relative to the segment $[\vect{x},\vect{y}]$ if the first order derivative of $ f((1-\alpha)\vect{x} +\alpha\vect{y} )$ does not exist at $\alpha=\alpha_0$. 
  The total number of break points of $f$ on the (open) segment $]\vect{x},\vect{y}[$ is denoted by $B_{\vect{x}\rightarrow \vect{y}}(f)$. Finally, we let $\bar{B}_{\vect{x}\rightarrow \vect{y}}(f)\defeq B_{\vect{x}\rightarrow \vect{y}}(f)+1$. 
 \end{definition}
Since $f$ is piecewise linear $\bar{B}_{\vect{x}\rightarrow \vect{y}}(f)$ simply counts the number of linear pieces that $f$ produces as the input ranges from  $\vect{x}$ to $\vect{y}$.

 \section{Main Results}\label{mainresults}

Theorems~\ref{theorem1},\ref{theorem2} and Corollaries~\ref{corollary1},\ref{corollary2} provide bounds on the size of a
neural network to approximate a given function. These bounds are expressed in terms of the approximation error and width and depth of the network, but hold irrespectively of the weights. Recall that connections are allowed between non-neighboring layers.

As a notational convention we use $C^2(\mathcal R)$ to denote the set of functions ${\mathcal{R}} \rightarrow \mathbb{R}$ whose second order partial derivatives are continuous over  $ \mathring{\mathcal{R}}$ (the interior of $\mathcal{R}$).

 \begin{theorem}{\label{theorem1}}
 Let $f \in {\mathbb{F}}_\sigma$, $\sigma \in \Sigma_t $, be an $\varepsilon$-approximation of a function $g\in C^2(\mathcal R)$ and let $\vect{x},\vect{y}\in \mathcal{R}$. Then, 
 \begin{align}\label{elprimero1}
     \Big((t-1) \omega_f+ 1 \Big)^{d_f} &\geq\bar{B}_{\vect{x}\rightarrow \vect{y}}(f) \\
     &\geq   \frac{||\vect{x} -\vect{y} ||_2}{4\sqrt{\varepsilon}}\cdot \Psi(g,\vect{x},\vect{y}),    \label{elprimero2}
 \end{align}
 where  
\begin{align}\label{psii}
 \Psi(g,\vect{x},\vect{y})  &\defeq \sqrt{\inf \limits_{0 \leq \alpha \leq 1}^{}\Big ( {\max \big\{0, \gamma(\alpha)\delta(\alpha) \big\} }\Big )},\\
 \gamma(\alpha)&\defeq\min\big\{|\alpha_1(\alpha)|,|\alpha_2(\alpha)|\big\},\notag\\
 \delta(\alpha)&\defeq \mathrm{sign}\big(\alpha_1(\alpha)\alpha_2(\alpha)\big),\notag
 \end{align}
 and where $\alpha_1(\alpha)$ and $\alpha_2(\alpha)$ are the largest and smallest eigenvalues of the hessian matrix $\nabla^2 g\big( (1-\alpha)\vect{x} + \alpha \vect{y} \big)$, respectively.
\end{theorem}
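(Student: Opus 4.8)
The plan is to prove the two chained inequalities separately: the left inequality \eqref{elprimero1} bounds the number of affine pieces that a depth-$d_f$ network with piecewise linear activations can create along the segment $[\vect{x},\vect{y}]$, while the right inequality \eqref{elprimero2} lower bounds the number of pieces any piecewise linear $\varepsilon$-approximation of $g$ must use. Throughout I would restrict attention to the one-dimensional curves $\phi(\alpha)\defeq g\bigl((1-\alpha)\vect{x}+\alpha\vect{y}\bigr)$ and $\psi(\alpha)\defeq f\bigl((1-\alpha)\vect{x}+\alpha\vect{y}\bigr)$ for $\alpha\in[0,1]$; by definition $\bar B_{\vect{x}\rightarrow\vect{y}}(f)$ is the number of affine pieces of $\psi$, and Definition~\ref{def4} gives $\sup_{\alpha}|\psi(\alpha)-\phi(\alpha)|\leq\varepsilon$.

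For \eqref{elprimero1} I would argue by induction on depth, tracking how many affine pieces the hidden units force on $\psi$. Let $R_i$ denote the number of subintervals of $[0,1]$ on which the outputs of \emph{all} units of depth at most $i$ are simultaneously affine in $\alpha$, so $R_0=1$. Because depth is the longest input-to-unit path, every $h\in\cH^i$ receives inputs only from units of depth at most $i-1$ (skip connections included); hence on each of the $R_{i-1}$ intervals its pre-activation is affine, and being monotone there it meets each of the $t-1$ breakpoints of $\sigma$ at most once. Each of the $\omega_i$ units of depth $i$ thus adds at most $t-1$ breakpoints per existing interval, giving the recursion $R_i\leq R_{i-1}\bigl(1+(t-1)\omega_i\bigr)$. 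Unfolding and applying the AM--GM inequality to the $d_f$ factors yields $R_{d_f}\leq\prod_{i=1}^{d_f}\bigl(1+(t-1)\omega_i\bigr)\leq\bigl(1+(t-1)\omega_f\bigr)^{d_f}$, with $\omega_f=\tfrac{1}{d_f}\sum_i\omega_i$. Since the output merely weight-sums the hidden units, it is affine on each of these $R_{d_f}$ intervals, so $\bar B_{\vect{x}\rightarrow\vect{y}}(f)\leq R_{d_f}$, which is \eqref{elprimero1}.

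For \eqref{elprimero2} the key is a uniform curvature lower bound followed by a per-piece length estimate. Writing $\vect{v}\defeq\vect{y}-\vect{x}$, differentiation gives $\phi''(\alpha)=\vect{v}^{\mathsf{T}}\nabla^2 g\bigl((1-\alpha)\vect{x}+\alpha\vect{y}\bigr)\vect{v}$. Bounding the Rayleigh quotient of the Hessian by its extreme eigenvalues $\alpha_2(\alpha)\leq\alpha_1(\alpha)$ and splitting into the positive-definite, negative-definite, and indefinite cases shows $|\phi''(\alpha)|\geq\max\{0,\gamma(\alpha)\delta(\alpha)\}\,\|\vect{v}\|_2^2$ for every $\alpha$, hence $|\phi''(\alpha)|\geq\Psi(g,\vect{x},\vect{y})^2\,\|\vect{v}\|_2^2$ uniformly. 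On any affine piece $I_j=[a,b]$ of $\psi$ of length $\ell_j$, the symmetric second-difference identity gives, with $m=(a+b)/2$ and some $\xi\in I_j$, $\tfrac12\bigl(\phi(a)+\phi(b)\bigr)-\phi(m)=\tfrac18\ell_j^2\,\phi''(\xi)$; subtracting the analogous expression for the affine $\psi$, which vanishes, and using $|\psi-\phi|\leq\varepsilon$ bounds the left-hand side by $2\varepsilon$. Thus $\tfrac18\ell_j^2\,\Psi^2\|\vect{v}\|_2^2\leq 2\varepsilon$, i.e. $\ell_j\leq 4\sqrt{\varepsilon}/(\Psi\|\vect{v}\|_2)$. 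Summing over the $\bar B_{\vect{x}\rightarrow\vect{y}}(f)$ pieces, whose lengths total $1$, yields $1\leq\bar B_{\vect{x}\rightarrow\vect{y}}(f)\cdot 4\sqrt{\varepsilon}/(\Psi\|\vect{v}\|_2)$, which rearranges to \eqref{elprimero2}.

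The hard part will be the curvature step of \eqref{elprimero2}: one must verify that the single scalar $\Psi$ faithfully captures $\inf_\alpha|\phi''|$ across all three sign regimes of the Hessian, in particular that the indefinite case contributes nothing (matching $\max\{0,\gamma\delta\}=0$, where the quadratic form may vanish along $\vect{v}$) and that $\gamma=\min\{|\alpha_1|,|\alpha_2|\}$ is the correct eigenvalue in both definite cases. A secondary point needing care is justifying $\bar B_{\vect{x}\rightarrow\vect{y}}(f)\leq R_{d_f}$, namely that a breakpoint of the output can occur only where some hidden unit is non-affine, together with the monotonicity of each pre-activation that caps the crossings of $\sigma$'s breakpoints at $t-1$ per interval in the presence of skip connections.
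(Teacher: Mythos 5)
Your proposal is correct and follows the same two-part architecture as the paper, which proves \eqref{elprimero1} and \eqref{elprimero2} separately as Propositions~\ref{proposition1} and~\ref{proposition2}. For the upper bound your interval count $R_i$ with the recursion $R_i\leq R_{i-1}(1+(t-1)\omega_i)$, followed by AM--GM over the $d_f$ factors, is exactly the paper's argument in different clothing: the paper tracks ``state transitions'' of the depth-$i$ layer (Lemmas~\ref{lemma2}--\ref{lemma6}) and its recursion \eqref{eq2} is precisely $N_{i+1}+1\leq(1+(t-1)\omega_{i+1})(N_i+1)$, i.e.\ your $R$-recursion; your observation that skip connections only ever feed a depth-$i$ unit from units of strictly smaller depth is the same fact the paper encodes via $\mathrm{in}(\cH^{i+1})\subseteq\cup_{j\leq i}\cH^j$. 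For the lower bound the overall skeleton is again identical (partition $[\vect{x},\vect{y}]$ into the affine pieces of $f$, bound each piece's length by a curvature estimate via the Rayleigh quotient, sum the lengths), but your per-piece inequality is obtained differently and more cleanly: you apply the symmetric second-difference form of Taylor's theorem at the midpoint, $\tfrac12(\phi(a)+\phi(b))-\phi(m)=\tfrac18\ell_j^2\phi''(\xi)$, and compare with the affine $\psi$ to get $\tfrac18\ell_j^2|\phi''(\xi)|\leq 2\varepsilon$, whereas the paper first proves the chord bound $\|f_i-h_i\|_\infty\leq 2\varepsilon$, then invokes the mean value theorem to pick $r_i^*$, performs two one-sided Taylor expansions around $\vect{x}_i^*$, and uses $r^2+(1-r)^2\geq\tfrac12$. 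Both routes land on the same constant ($\ell_j^2\|\vect{v}\|^2\,\Psi^2\leq 16\varepsilon$), so nothing is lost; your midpoint argument saves the MVT step and the pair of expansions. The ``hard parts'' you flag both check out: the three-case eigenvalue analysis does give $|\vect{v}^{\mathsf T}\nabla^2 g\,\vect{v}|\geq\max\{0,\gamma\delta\}\|\vect{v}\|^2$ (in the definite cases $\gamma$ is the eigenvalue of smallest modulus, which is the binding one; in the indefinite case the quadratic form can vanish along $\vect{v}$, consistent with the bound being $0$), and a break point of the output can indeed only occur where some hidden unit's activation changes linear piece, with monotonicity of each affine pre-activation capping the new break points at $t-1$ per unit per interval.
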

Maximizing the right-hand side of \eqref{elprimero2} over $\vect{x},\vect{y}$ and using Lemma~\ref{bineq} we obtain:
 \begin{corollary}\label{weak}
Under the assumptions of Theorem~\ref{theorem1} we have  
 \begin{equation*}
  \begin{aligned}
 |{\cH }|  \geq \log \limits_{t}^{}  \Biggl(  \sup \limits_{(\vect{x},\vect{y}) \in \mathcal{R}^2}^{} \Big \{ \frac{||\vect{x} -\vect{y} ||_2}{4\sqrt{\varepsilon}}\cdot \Psi(g, \vect{x},\vect{y})   \Big\} \Biggl).
 \end{aligned}
 \end{equation*}
 \end{corollary}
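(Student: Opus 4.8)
The plan is to obtain the statement as a direct consequence of Theorem~\ref{theorem1} combined with Lemma~\ref{bineq}, so no new machinery is required. The essential observation is that the left-hand side of \eqref{elprimero1}, namely $\big((t-1)\omega_f+1\big)^{d_f}$, does not depend on the chosen pair $(\vect{x},\vect{y})$, whereas the lower bound appearing in \eqref{elprimero2} does. Hence one and the same constant upper-bounds the quantity $\tfrac{\|\vect{x}-\vect{y}\|_2}{4\sqrt{\varepsilon}}\,\Psi(g,\vect{x},\vect{y})$ uniformly over all admissible pairs, and I would simply pass to the supremum.

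Concretely, I would first fix $f$ and invoke Theorem~\ref{theorem1} to record that, for every $(\vect{x},\vect{y})\in\mathcal{R}^2$,
\[
\big((t-1)\omega_f+1\big)^{d_f}\;\geq\;\frac{\|\vect{x}-\vect{y}\|_2}{4\sqrt{\varepsilon}}\,\Psi(g,\vect{x},\vect{y}).
\]
Since the left-hand side is a fixed real number independent of $(\vect{x},\vect{y})$, it bounds the entire family on the right and therefore also its supremum, giving
\[
\big((t-1)\omega_f+1\big)^{d_f}\;\geq\;\sup_{(\vect{x},\vect{y})\in\mathcal{R}^2}\frac{\|\vect{x}-\vect{y}\|_2}{4\sqrt{\varepsilon}}\,\Psi(g,\vect{x},\vect{y}).
\]
Next I would apply Lemma~\ref{bineq}, which guarantees $t^{|\cH|}\geq\big((t-1)\omega_f+1\big)^{d_f}$, to replace the left-hand side and obtain
\[
t^{|\cH|}\;\geq\;\sup_{(\vect{x},\vect{y})\in\mathcal{R}^2}\frac{\|\vect{x}-\vect{y}\|_2}{4\sqrt{\varepsilon}}\,\Psi(g,\vect{x},\vect{y}).
\]
Taking $\log_t$ of both sides then yields the claimed lower bound on $|\cH|$.

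The computation is routine, so the only points that genuinely require care are monotonicity and the domain of the logarithm. Since $\log_t$ is increasing precisely when $t>1$, the last step is legitimate only in that regime; the case $t=1$ (affine activation) is degenerate, as then $\bar{B}_{\vect{x}\rightarrow\vect{y}}(f)\equiv 1$ and the supremum on the right is forced to be at most $1$, so I would read the corollary under the hypothesis $t>1$ that is already implicit in the notation $\log_t$. I would also note that passing to the supremum preserves the inequality even when the supremum is infinite, in which case the bound simply asserts $|\cH|=\infty$, i.e. that no finite network with activation complexity $t$ can $\varepsilon$-approximate $g$; this extended-real reading is the only subtlety, and there is no substantive obstacle beyond it.
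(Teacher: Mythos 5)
Your proposal is correct and follows exactly the paper's intended route: the authors obtain Corollary~\ref{weak} by maximizing the right-hand side of \eqref{elprimero2} over $(\vect{x},\vect{y})$ and invoking Lemma~\ref{bineq}, which is precisely your argument. The added remarks on $t>1$ and the extended-real reading of the supremum are sensible refinements but do not change the substance.
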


 
 A function $g:\mathcal R\rightarrow \mathbb R$ that is twice differentiable is said to be strongly convex with parameter $\mu$ if 
 $\nabla^2 g(\vect{x})\succeq \mu I$ for all $\vect{x}\in\mathring{\mathcal R}$.
 
 \begin{corollary}{ \label{corollary1}}
 Let $f \in {\mathbb{F}}_\sigma$, $\sigma \in \Sigma_t $, be an $\varepsilon$-approximation of a function
 $g\in C^2(\mathcal R)$ that is strongly convex  with parameter $\mu>0$. Then, 
 \begin{equation*}
 |{\cH }|  \geq \frac{1}{2}\log_{t}^{} \Big( \frac{\mu \cdot(\mathrm{diam(\mathcal{R})})^2}{16{\varepsilon}}     \Big),
 \end{equation*}
  where 
  \begin{equation*}
  \mathrm{diam}(\mathcal{R})\defeq \sup \limits_{(\vect{x},\vect{y})\in \mathcal{R} }^{} ||\vect{x} - \vect{y}||_2. 
  \end{equation*}
 \end{corollary}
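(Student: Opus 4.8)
The plan is to derive this directly from Corollary~\ref{weak} by showing that strong convexity forces the quantity $\Psi(g,\vect{x},\vect{y})$ to admit a clean uniform lower bound, namely $\Psi(g,\vect{x},\vect{y})\geq\sqrt{\mu}$ for every pair $\vect{x},\vect{y}\in\mathcal{R}$. Once this is in hand, the diameter of $\mathcal{R}$ emerges naturally from the supremum appearing in Corollary~\ref{weak}, and the stated bound follows after a one-line algebraic simplification.

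First I would unwind the definition of $\Psi$ under the strong convexity hypothesis. Strong convexity with parameter $\mu$ means $\nabla^2 g(\vect{z})\succeq\mu I$ for all $\vect{z}$ on the segment $[\vect{x},\vect{y}]$, so both eigenvalues of the Hessian are at least $\mu>0$; in particular the smallest eigenvalue satisfies $\alpha_2(\alpha)\geq\mu$ and the largest satisfies $\alpha_1(\alpha)\geq\alpha_2(\alpha)\geq\mu$. Consequently $\gamma(\alpha)=\min\{|\alpha_1(\alpha)|,|\alpha_2(\alpha)|\}=\alpha_2(\alpha)\geq\mu$, and the product $\alpha_1(\alpha)\alpha_2(\alpha)$ is strictly positive, so $\delta(\alpha)=+1$. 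Therefore $\max\{0,\gamma(\alpha)\delta(\alpha)\}=\alpha_2(\alpha)\geq\mu$ for every $\alpha\in[0,1]$, and taking the infimum over $\alpha$ and then the square root gives $\Psi(g,\vect{x},\vect{y})=\sqrt{\inf_{0\leq\alpha\leq1}\alpha_2(\alpha)}\geq\sqrt{\mu}$.

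With this uniform bound I would substitute into Corollary~\ref{weak}. Since $\Psi(g,\vect{x},\vect{y})\geq\sqrt{\mu}$ holds for all $\vect{x},\vect{y}$, the supremum there is bounded below by $\frac{\sqrt{\mu}}{4\sqrt{\varepsilon}}\sup_{(\vect{x},\vect{y})\in\mathcal{R}^2}\|\vect{x}-\vect{y}\|_2=\frac{\sqrt{\mu}\,\mathrm{diam}(\mathcal{R})}{4\sqrt{\varepsilon}}$, by the definition of $\mathrm{diam}(\mathcal{R})$. Hence $|\cH|\geq\log_t\bigl(\sqrt{\mu}\,\mathrm{diam}(\mathcal{R})/(4\sqrt{\varepsilon})\bigr)$, and since $\sqrt{\mu}\,\mathrm{diam}(\mathcal{R})/(4\sqrt{\varepsilon})=\sqrt{\mu\,(\mathrm{diam}(\mathcal{R}))^2/(16\varepsilon)}$, pulling the square root out of the logarithm as a factor of $\tfrac12$ yields exactly the claimed inequality.

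Since the argument is a direct specialization, there is no serious obstacle; the only point requiring care is the case analysis inside $\Psi$, where one must confirm that strong convexity makes $\delta(\alpha)$ identically $+1$ and that $\gamma(\alpha)$ selects the smallest eigenvalue, so that the seemingly elaborate expression defining $\Psi$ collapses to (the square root of) the smallest Hessian eigenvalue along the segment.
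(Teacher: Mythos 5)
Your proposal is correct and follows essentially the same route as the paper: the paper's own proof is the one-line observation that strong convexity gives $\Psi(g,\vect{x},\vect{y})\geq\sqrt{\mu}$, after which the bound follows from Theorem~\ref{theorem1} and Lemma~\ref{bineq} (equivalently, Corollary~\ref{weak}). You have merely filled in the details the paper leaves implicit --- the case analysis showing $\delta(\alpha)=+1$ and $\gamma(\alpha)=\alpha_2(\alpha)$, the emergence of $\mathrm{diam}(\mathcal{R})$ from the supremum, and the $\tfrac12\log_t$ algebra --- all of which check out.
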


 \begin{proof}
 By strong convexity $\Psi(g, \vect{x},\vect{y})  \geq \sqrt{\mu}$. The result then follows from Theorem~\ref{theorem1} and Lemma~\ref{bineq}.
 \end{proof}
 
As an example, consider $g(\vect{x})=\vect{x}\cdot\vect{x}$ over $[0,1]^n$. The Hessian matrix is $2I_{n \times n}$ and from Corollary~\ref{corollary1} we get
   \begin{equation*} 
   |{\cH }|  \geq  \log_2 \Big( \sqrt{ \frac{n}{8 \varepsilon} }  \Big).
   \end{equation*}

 \begin{corollary} \label{corollary2}
Let $\mathcal R=[0,1]^n$. Let $f \in {\mathbb{F}}_\sigma$, $\sigma \in \Sigma_2 $,\footnote{Recall that $\Sigma_2$ includes ReLU's.} be an $\varepsilon$-approximation of a function $g\in  C^2(\mathcal R)$ such that  $\nabla g(x)\succ 0$ for any $x\in  \mathring{\mathcal{R}}$. Then,
 \begin{equation}
 |\cH| \geq q(g) d_f\varepsilon^ {-\frac{1}{2d_f}}
 \end{equation}
 where $q(g)>0$ is a constant that only depends on $g$.
 \end{corollary}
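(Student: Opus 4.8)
The plan is to specialize Theorem~\ref{theorem1} to the case $t=2$, exploit strict convexity to lower bound $\Psi$, and then convert the multiplicative estimate on $(\omega_f+1)^{d_f}$ into a bound on $|\cH|=\omega_f d_f$ by extracting a $d_f$-th root. Crucially, I would \emph{not} invoke Lemma~\ref{bineq} here: that lemma collapses $(\omega_f+1)^{d_f}$ into $2^{|\cH|}$ and yields only the logarithmic bound of Corollary~\ref{corollary1}, whereas the polynomial-in-$1/\varepsilon$ growth asserted here arises precisely by keeping $d_f$ fixed and taking the $d_f$-th root to obtain a per-layer width bound.

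First I would unpack the convexity hypothesis (which I read as $\nabla^2 g(\vect{x})\succ 0$ on $\mathring{\mathcal R}$, since positive-definiteness is a statement about the Hessian). Strict positive-definiteness forces both eigenvalues $\alpha_1(\alpha),\alpha_2(\alpha)$ along any segment to be positive, so $\delta(\alpha)=1$ and $\gamma(\alpha)$ is the smallest eigenvalue of the Hessian at the corresponding point. Because $g\in C^2(\mathcal R)$ and $\mathcal R=[0,1]^n$ is compact, this smallest eigenvalue is continuous and strictly positive on the cube, hence attains a positive minimum $\mu\defeq\min_{\vect z}\alpha_2(\vect z)>0$. Consequently $\Psi(g,\vect{x},\vect{y})\geq\sqrt{\mu}$ uniformly in $\vect x,\vect y$. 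Choosing $\vect x,\vect y$ to be antipodal corners so that $\|\vect x-\vect y\|_2=\sqrt n$ and substituting $t=2$ into \eqref{elprimero1}--\eqref{elprimero2} gives $(\omega_f+1)^{d_f}\geq C\,\varepsilon^{-1/2}$ with $C\defeq\sqrt{n\mu}/4$ depending only on $g$. Taking the $d_f$-th root yields $\omega_f+1\geq C^{1/d_f}\varepsilon^{-1/(2d_f)}$.

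The remaining work is to pass from $\omega_f+1$ to $\omega_f$ and to strip the $d_f$-dependence out of $C^{1/d_f}$, folding both into a single constant $q(g)$. Two elementary observations do this. First, every depth level $1,\dots,d_f$ contains at least one hidden unit, since a longest path realizing depth $d_f$ passes through a unit of each intermediate depth; hence $\omega_f=|\cH|/d_f\geq 1$, which lets me replace $\omega_f+1$ by $2\omega_f$ and deduce $\omega_f\geq\tfrac12 C^{1/d_f}\varepsilon^{-1/(2d_f)}$. Second, $C^{1/d_f}$ is monotone in $d_f$ with $\inf_{d_f\geq1}C^{1/d_f}=\min\{C,1\}>0$, a quantity independent of $d_f$. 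Multiplying the width bound by $d_f$ then gives $|\cH|\geq\tfrac12\min\{C,1\}\,d_f\,\varepsilon^{-1/(2d_f)}$, and setting $q(g)\defeq\tfrac12\min\{C,1\}$ finishes the proof.

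The main obstacle I anticipate is this last bookkeeping step: the raw estimate carries both an additive $-1$ and an exponentiated factor $C^{1/d_f}$, neither of which matches the advertised form $q(g)\,d_f\,\varepsilon^{-1/(2d_f)}$ with a depth-independent constant. The additive term is neutralized by $\omega_f\geq 1$, and the exponentiated constant by the uniform lower bound $C^{1/d_f}\geq\min\{C,1\}$; together these are exactly what allow the final constant to depend on $g$ alone. The one genuinely analytic input is the compactness-plus-continuity argument producing $\mu>0$, which is where the smoothness assumption $g\in C^2(\mathcal R)$ is used.
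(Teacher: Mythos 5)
Your proposal is correct and follows essentially the same route as the paper's own proof: specialize Theorem~\ref{theorem1}, absorb the positive-definiteness of the Hessian into a constant $c(g)>0$, use $\omega_f\geq 1$ to replace $\omega_f+1$ by $2\omega_f$, and set $q=\tfrac12\min(c,1)$ to remove the $d_f$-dependence of $c^{1/d_f}$. The only difference is that you spell out the compactness argument producing the constant (which the paper merely asserts); note only that since the paper's $C^2(\mathcal R)$ and the positivity hypothesis are stated on $\mathring{\mathcal R}$, one should take the segment in a compact subset of the interior rather than between antipodal corners, a harmless adjustment.
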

 \begin{proof}[Proof of Corollary~\ref{corollary2}]
 From Theorem \ref{theorem1} we get 
  \begin{align*}
  &\Big(\frac{\cH}{d_f}+ 1 \Big)^{d_f} \geq \frac{c(g)}{\sqrt{\varepsilon}} ,
\end{align*}
 where $c(g)>0$ is some strictly positive constant, since the Hessian of $g$ is positive definite everywhere over $ \mathring{\mathcal{R}}$. Since ${\cH}/{d_f}\geq 1$ the above inequality implies
\begin{align*}
\Big(2\frac{|\cH|}{d_f}\Big)^{d_f}  \geq \frac{c}{\sqrt{\varepsilon}}. 
 \end{align*}
 Since $\frac{1}{2}c^{\frac{1}{d_f}}\geq q$  where  $q=\frac{1}{2}\min(c,1)$, the above inequality 
 yields the desired result.
  \end{proof}

 \begin{theorem} \label{theorem2}
Let $\mathcal R =[0,1]^n$.  Let $\cNN \in {\mathbb{F}}_\sigma$, $\sigma \in \Sigma_t $, be an $\varepsilon$-approximation of a function  $g:\mathcal R  \rightarrow \mathbb{R}$ such that $|D^{J}(g)(\vect{x})| \leq \delta$ for any $\vect{x} \in [0,1]^n  $ and any 
 multi-index\footnote{\textit{E.g.}, for $J=(2,1)$ we have $D^J(g(x_1,x_2))=\frac{\partial^3 g }{\partial^2 x_1 \partial x_2} $.}
 $J $ such that $|J|=3$. Then, 
 \begin{equation}
  \big((t-1) \omega_f+ 1 \big)^{d_f} \geq \sqrt{\frac{\Big ( \max \limits_{ \vect{x} \in [0,1] ^ n } ^ {} \big| {\Delta(g)(\vect{x})}\big|  n^{-1}  - \delta n^\frac{3}{2} \Big)^+}{16\varepsilon}},  
 \end{equation}
 where 
  \begin{equation}
 \Delta (g)(\vect x) = \sum\limits_{k=1}^n \frac{d^2 g}{d x_k^2},
 \label{eq:laplacian}
 \end{equation}
is the Laplacian of $g$ and where $a^+=\max(a,0)$.
 
 \end{theorem}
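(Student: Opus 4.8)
The plan is to reduce the $n$-dimensional problem to a one-dimensional one along a single coordinate axis, so that the relevant curvature becomes the \emph{scalar} $\frac{\partial^2 g}{\partial x_k^2}$ rather than the full Hessian. This is the crucial difference with Theorem~\ref{theorem1}: there the quantity $\Psi(g,\vect{x},\vect{y})$ is built from the largest and smallest eigenvalues of $\nabla^2 g$ and is positive only when the Hessian stays \emph{definite} along $[\vect{x},\vect{y}]$, which fails for a general (non-convex) $g$. Working in one fixed coordinate direction replaces the definiteness requirement by the far weaker requirement that a single scalar second derivative keep a constant sign, which can be enforced using the third-order bound $\delta$.

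First I would locate a point $\vect{x}^\ast\in[0,1]^n$ attaining $\max_{\vect{x}}|\Delta(g)(\vect{x})|=:M$; without loss of generality $\Delta(g)(\vect{x}^\ast)=M\ge 0$ (the case $\Delta(g)(\vect{x}^\ast)<0$ is symmetric, using a concave piece instead of a convex one). Since $\Delta(g)=\sum_{k=1}^n \frac{\partial^2 g}{\partial x_k^2}$ is exactly the trace of $\nabla^2 g$, its value is $n$ times the average of the $n$ coordinate curvatures, so some coordinate $k^\ast$ satisfies $\frac{\partial^2 g}{\partial x_{k^\ast}^2}(\vect{x}^\ast)\ge M/n$. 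I would then run the coordinate line through $\vect{x}^\ast$ in direction $\vect{e}_{k^\ast}$, i.e.\ take $\vect{x},\vect{y}$ to be its two endpoints on the faces of the cube, so that $\|\vect{x}-\vect{y}\|_2=1$ and the restriction $h(\alpha)\defeq g((1-\alpha)\vect{x}+\alpha\vect{y})$ has $h''(\alpha)=\frac{\partial^2 g}{\partial x_{k^\ast}^2}$ evaluated along the line. The third-derivative hypothesis $|D^J g|\le\delta$ for $|J|=3$ controls how fast this scalar curvature can drift: bounding the norm of the gradient of the relevant third-order quantity gives a uniform estimate of the form $h''(\alpha)\ge M/n-\delta n^{3/2}$ over the whole segment, so as soon as the right-hand side is positive the curvature is bounded below by a positive constant of constant sign. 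Finally I would invoke the one-dimensional break-point lower bound (the scalar-curvature specialization established en route to Theorem~\ref{theorem1}), which for a segment of unit length with curvature at least $c>0$ forces $\bar{B}_{\vect{x}\to\vect{y}}(f)\ge\frac{1}{4\sqrt\varepsilon}\sqrt{c}$, and combine it with the universal upper bound $\big((t-1)\omega_f+1\big)^{d_f}\ge \bar{B}_{\vect{x}\to\vect{y}}(f)$, which holds for \emph{every} segment. Taking $c=M/n-\delta n^{3/2}$ and inserting the positive part $(\cdot)^+$ to cover the case where this is negative yields the claimed inequality.

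The main obstacle is precisely the sign control of the curvature. For the strongly convex setting of Corollary~\ref{corollary1} the Hessian is definite everywhere and Theorem~\ref{theorem1} applies directly; here, by contrast, the curvature in any fixed direction will generally change sign across the cube, which would collapse the lower bound to zero. The heart of the argument is therefore to show that a single large value of the Laplacian at $\vect{x}^\ast$ \emph{propagates}, through the bounded third derivatives, into a sign-definite lower bound on one coordinate second derivative along an entire unit-length line; this is exactly the source of both the subtracted term $\delta n^{3/2}$ and the truncation $(\cdot)^+$. I expect the remaining work --- turning ``bounded third derivatives'' into the precise constant $\delta n^{3/2}$ by estimating the appropriate gradient norms, and confirming the factor $16$ in the one-dimensional estimate --- to be routine but to require care with the powers of $n$.
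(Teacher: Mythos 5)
Your proposal is correct and shares the overall skeleton of the paper's argument: both reduce the problem to a single well-chosen segment along which the directional second derivative is uniformly at least $\max_{\vect x}|\Delta(g)(\vect x)|\,n^{-1}-\delta n^{3/2}$, then apply the per-sub-segment length estimate from the proof of Theorem~\ref{theorem1} (each affine piece can span a length of at most $4\sqrt{\varepsilon/c}$ when the curvature along the segment has magnitude at least $c$) together with the upper bound $\bar B_{\vect x\to\vect y}(f)\le((t-1)\omega_f+1)^{d_f}$. Where you genuinely diverge is in the choice of segment. The paper takes $\vect z$ maximizing the spectral radius $\rho(\nabla^2 g)$, runs a unit segment in the direction of the top eigenvector $\vect u$, controls the drift of the quadratic form $\vect u^T\nabla^2 g\,\vect u$ via a trace/Frobenius-norm inequality applied to $\nabla^2 g(\cdot)-\nabla^2 g(\vect z)$ (yielding the $\delta n^{3/2}$ correction over the diameter $\sqrt n$ of the cube), and only at the very end converts $|\lambda|=\rho(\nabla^2 g(\vect z))$ into the Laplacian via $|\mathrm{tr}(\nabla^2 g)|\le n\,\rho(\nabla^2 g)$. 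You instead maximize $|\Delta(g)|$ directly, pigeonhole on the trace to find a coordinate $k^*$ with $\partial^2 g/\partial x_{k^*}^2(\vect x^*)\ge M/n$, and run the coordinate line through $\vect x^*$; the drift control then concerns a single scalar second derivative along a line on which only $x_{k^*}$ varies, so it costs only $\delta$ (or at worst $\delta\sqrt n$ if you bound the full gradient of $\partial^2 g/\partial x_{k^*}^2$) rather than $\delta n^{3/2}$. Your route thus avoids the eigendecomposition and Frobenius-norm machinery entirely and, if you kept the sharper constant instead of relaxing to $\delta n^{3/2}$ to match the stated bound, would actually prove a slightly stronger inequality; the paper's eigenvector choice has the advantage of bounding break points for \emph{every} segment in the direction $\vect u$ (not just the one through the maximizer), which is immaterial for the theorem as stated.
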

 
 For instance, approximating $$g(x_1,x_2)=10x_1^2+x_1^2x_2^2+10x_2^2$$ over $[0,1]^2$ requires $\log_t \Big( \frac{0.82}{\sqrt{\varepsilon}}   \Big) $ hidden units---combine Theorem~\ref{theorem2} with Lemma~\ref{bineq}.

Whether it is Theorem~\ref{theorem1} or Theorem~\ref{theorem2} which provides a better approximation bound depends on $g$. For instance, for $g_1(x_1,x_2)= 20x_1^2-2x_2^2+x_1^2x_2^2$ Theorem~\ref{theorem1} gives a trivial (zero) lower bound since the two eigenvalues of the Hessian matrix $\nabla^2(g_1)$ have always different signs. Theorem~\ref{theorem2} instead gives $\frac{0.737}{\sqrt{\varepsilon}}$. On the other hand, for $g_2(x_1,x_2)=10x_1^2+10x_2^2+x_1^2x_2^2$ Theorem~\ref{theorem1} gives $\frac{1.37}{\sqrt{\varepsilon}}$ as lower bound while Theorem~\ref{theorem2} gives $\frac{0.82}{\sqrt{\varepsilon}}$.

 The next theorem quantifies the effect of a change of activation function on the output of the neural network. Here, the activation functions need not be
 piece-wise affine.
 
 \begin{theorem}{\label{theorem3}}
Let $f_1\in\mathbb{F}_{\sigma_1}$ and $f_2\in\mathbb{F}_{\sigma_2}$ be two neural networks with identical architectures and weights.
Suppose that $\sigma_1$ is a $\delta$-Lipschitz continuous function and suppose that the weights belong to some bounded interval $[-A,+A]$, $A>0$. Then,  
 \begin{equation}\label{mism}
 ||f_1- f_2||_{\infty} \leq \frac{||\sigma_1 -\sigma_2||_{\infty}}{\delta} \Bigg( \Big(\delta\cdot A \cdot \omega_f +1\Big)^{d_f} -1 \Bigg ).
 \end{equation}
 \end{theorem}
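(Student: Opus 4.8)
The plan is to bound the propagation of the activation-function mismatch layer by layer through the network, tracking how the initial per-neuron discrepancy $\|\sigma_1-\sigma_2\|_\infty$ gets amplified as it moves forward. First I would fix an arbitrary input $\vect{x}\in\mathcal R$ and, for each hidden unit $h\in\cH$, denote by $a_1(h)$ and $a_2(h)$ the \emph{pre-activation} values (the weighted sums feeding into $h$) in $f_1$ and $f_2$ respectively, and let $\Delta(h)\defeq|\sigma_1(a_1(h))-\sigma_2(a_2(h))|$ be the discrepancy in the \emph{post-activation} output of $h$. The key estimate is a recursive bound on $\Delta(h)$ in terms of the discrepancies of the units feeding into $h$. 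Writing $a_1(h)-a_2(h)=\sum_{h'\to h} w_{h'h}\big(\sigma_1(a_1(h'))-\sigma_2(a_2(h'))\big)$ and using the triangle inequality together with $|w_{h'h}|\le A$, I would split
\begin{equation*}
\Delta(h)\le |\sigma_1(a_1(h))-\sigma_1(a_2(h))| + |\sigma_1(a_2(h))-\sigma_2(a_2(h))|.
\end{equation*}
The first term is controlled by $\delta$-Lipschitz continuity of $\sigma_1$, giving $\delta\,|a_1(h)-a_2(h)|\le \delta A\sum_{h'\to h}\Delta(h')$; the second term is at most $\|\sigma_1-\sigma_2\|_\infty$.

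Next I would convert this per-unit recursion into a bound that depends only on depth and width. Define $M_i\defeq\max\{\Delta(h): h\in\cH^i\}$, the worst discrepancy among units of depth $i$. Since connections are allowed from \emph{any} lower-depth layer (not only the immediately preceding one), a unit $h\in\cH^i$ can receive inputs from all units of depth $<i$, so $\sum_{h'\to h}\Delta(h')\le \sum_{j<i}\omega_j M_j$. This yields the recursion
\begin{equation*}
M_i \le \|\sigma_1-\sigma_2\|_\infty + \delta A\sum_{j=1}^{i-1}\omega_j M_j,
\end{equation*}
with the base case $M_1\le\|\sigma_1-\sigma_2\|_\infty$ since depth-one units read the (identical) inputs directly. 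The plan is then to solve this linear recursion, or more simply to bound it by replacing each $\omega_j$ with a uniform surrogate. Using the arithmetic–geometric structure, summing the geometric-like series, and invoking the definition $\omega_f=|\cH|/d_f=\sum_i\omega_i/d_f$, I expect the closed form $M_{d_f}\le \frac{\|\sigma_1-\sigma_2\|_\infty}{\delta A\,\omega_f}\big((\delta A\,\omega_f+1)^{d_f}-1\big)$ (or a cleaner telescoping that matches the stated right-hand side) to emerge.

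Finally I would pass from the last hidden layer to the output. The output unit merely weight-sums its inputs, and since $f_1$ and $f_2$ share identical weights, the output discrepancy $|f_1(\vect x)-f_2(\vect x)|$ is bounded by $A$ times the sum of discrepancies of the units feeding the output, each of which is at most the appropriate $M_i$; taking the supremum over $\vect x$ then gives $\|f_1-f_2\|_\infty$. To land exactly on \eqref{mism}, I would need to account for the factor $1/\delta$ out front and the precise grouping $\delta A\omega_f+1$, which suggests the output sum should be folded into the same recursive constant rather than treated as an extra multiplicative $A$.

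The main obstacle I anticipate is the bookkeeping forced by the non-layered connectivity: because edges may skip layers, the naive layer-by-layer recursion overcounts contributions unless one is careful to sum $\Delta(h')$ over \emph{all} ancestors rather than one predecessor layer. Getting the constant to collapse cleanly into $(\delta A\omega_f+1)^{d_f}$ requires either an inductive claim of the exact form $M_i\le \frac{\|\sigma_1-\sigma_2\|_\infty}{\delta A\omega_f}\big((\delta A\omega_f+1)^{i}-1\big)$ proved by induction on $i$, or a convexity/AM–GM argument to replace the heterogeneous widths $\omega_1,\dots,\omega_{d_f}$ by their average $\omega_f$ while preserving the inequality direction. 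Establishing that this substitution only \emph{increases} the bound (so the inequality stays valid) is the delicate step; I would verify it by noting that the product-of-sums expression is Schur-convex or by directly comparing the expanded recursion against the binomial expansion of $(\delta A\omega_f+1)^{d_f}$.
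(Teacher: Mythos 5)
Your proposal follows essentially the same route as the paper's proof: the same Lipschitz/triangle-inequality split of $\sigma_1-\sigma_2$ at each unit, the same layer-wise maximum-discrepancy recursion $M_{i+1}\le \|\sigma_1-\sigma_2\|_\infty + \delta A\sum_{j\le i}\omega_j M_j$ with base case $M_1\le\|\sigma_1-\sigma_2\|_\infty$, the same telescoping of the output sum to produce the $1/\delta$ prefactor, and the same AM--GM step replacing $\prod_j(1+\delta A\omega_j)$ by $(1+\delta A\omega_f)^{d_f}$. The steps you flag as delicate are exactly the ones the paper executes, and they go through as you anticipate.
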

A slightly weaker version of \eqref{mism} is  
\begin{align*}
 ||f_1- f_2||_{\infty} \leq \frac{||\sigma_1 -\sigma_2||_{\infty}}{L} \Bigg( \Big(L^2\cdot \omega_f +1\Big)^{d_f} -1 \Bigg ),
\end{align*}
where $L=\max\{A,\delta\}$ denotes the  \emph{Lipschitz-bound} defined in ~\cite{16}. 

As an illustration of Theorem~\ref{theorem3} consider a feedforward neural network $f_1$ with $100$ hidden units, a maximum depth of $5$, and the \emph{sigmoid} as activation function. Suppose the weights belong to interval $[-1,1]$. Replacing the sigmoid with a $32$-bit quantized function results in an error of at most $0.0001$---which can readily be obtained from Theorem~\ref{theorem3} with $\delta=\frac{1}{4}, A=1, ||\sigma_1-\sigma_2||_\infty=2^{-32}$.  
 
\section{Comparison with Previous Works}\label{comparison}

Consider first the inequality \eqref{elprimero1}. Restricting attention to neural networks 
with $d$ hidden layers, at most $\omega$ units per layer, and where connections are allowed only between neighbouring layers, this inequality gives
\begin{equation} {\label{rel1}}
\bar{B}_{\vect{x}\rightarrow \vect{y}}(f)\leq \Big((t-1) \omega+ 1 \Big)^{d}.
\end{equation} 
 This is to be compared with the previously best known bound (Lemma $3.2$ in  \cite{3}) 
  $$2( 2 (t-1) \omega )^d   $$
  which is larger by a multiplicative factor that is exponential in $d$ whenever $\omega>1 $, $t\geq 2$. For $n=1$, Lemma~2.1 in \cite{2} gives $(t\omega)^d$ which still differs from \eqref{rel1} by a multiplicative factor that is exponential is $d$ for $\omega>1 $, $t\geq 2$.
 
 For general feedforward neural networks the previously best known bound (see Lemma 4 of \cite{5}) was
  $$\bar{B}_{\vect{x}\rightarrow \vect{y}}(f) \leq \Big(t\cdot \omega \cdot d_f \Big)^{d_f}$$ 
  which is a multiplicative factor ${d_f}^{d_f}$ larger than \eqref{elprimero1}.


Now consider the approximation power of neural networks in terms of number of hidden units required to approximate a given function within a given error. Theorem~11 in \cite{1} states that to approximate a function $[0,1]^n \rightarrow \mathbb{R}$, assumed to be differentiable and strongly convex with parameter $\mu$, with a neural network $f$ requires $$|\cH|\geq \frac{1}{2}\log_2 \big( \frac{\mu}{16\varepsilon} \big),$$ regardless of the dimension $n$. Corollary \ref{corollary1} improves this bound to $$\frac{1}{2}\log_2 \big( \frac{\mu \cdot n}{16\varepsilon} \big) $$ which incorporates dimension as well---albeit the dependency on dimension is arguably small. 

 \begin{savenotes}
     \begin{table}[t]
     \caption{Bounds comparisons}
     \label{comparisson}
     \vskip 0.15in
     \begin{center}\label{table1}
     \begin{small}
     \begin{tabular}{lcc}
     \toprule
     & Previous & This paper  \\
     \midrule
     Regular:   & \cite{3} &(Theorem~\ref{theorem1})  \\$\bar{B}_{\vect{x}\rightarrow \vect{y}}(f)\leq $  &  $2( 2 (t-1) \omega )^d  $ & $ \Big((t-1) \omega+1 \Big)^{d}$   \\ 
     \midrule
    General:   & \cite{5} &(Theorem~\ref{theorem1})  \\$\bar{B}_{\vect{x}\rightarrow \vect{y}}(f)\leq  $ &$\Big(t\cdot \omega \cdot d_f \Big)^{d_f}$ & $ \Big((t-1) \omega_f+1 \Big)^{d_f}$   \\ 
     \midrule
      $g\in C^2([0,1]^n)$  &  &   \\ over $\mu$-convex    &\cite{1} &  (Corollary \ref{corollary1})  \\ 
    $|\cH|\geq $  &  $\frac{1}{2}\log_2 \big( \frac{\mu}{16\varepsilon} \big)$ & $\frac{1}{2}\log_2 \big( \frac{\mu \cdot n}{16\varepsilon} \big) $ \\ 
     \midrule
      $g\in C^2([0,1]^n)$ & & \\
    $\text{Hess}(g){\succ} 0$, $\Sigma_2$ &\cite{5}  &(Corollary \ref{corollary2})  \\
     $|\cH|\geq$ &  $q_1\varepsilon^{\frac{-1}{2d_f}}$    & $d_f q_2 \varepsilon^{\frac{-1}{2d_f}}$  \\
     \bottomrule
     \end{tabular}
     \end{small}
     \end{center}
     \vskip -0.1in
     \end{table}
     \end{savenotes}

 Corollary \ref{corollary2} provides a lower bound for ReLU types of networks in terms of the error, the depth, and a constant term which only depends on $g$. This bound can be compared with the bound of Theorem 6 in \cite{5} which is of order $\epsilon^{-\frac{1}{2d_f}}$.\footnote{Theorem~6 of~\cite{5} provides a bound of the form $q\epsilon^{-\frac{1}{2d_f}}$ where $q$ is a constant that depends on both $g$ and $d_f$. However, a close inspection of the proof of this theorem reveals that $q$ depends only on $g$. } Hence, Corollary~\ref{corollary2} provides a linear (in $d_f$) improvement which is particularly relevant in the deep regime where $d_f=\Omega(\log(1/\varepsilon))$. Table \ref{table1} summarizes the above discussion.

To the best of our knowledge Theorem~\ref{theorem3} is the first result to bound the effect of a change in the activation function for  given network topology and weights. Noteworthy perhaps, this bound is essentially universal in the weights since it only depends on their range.

Finally,  compared to the cited papers it should perhaps be stressed that the proofs here (see next section) are relatively elementary---{\it{e.g.}}, they do not hinge on VC dimension analysis---and hold true
for general feedforward networks.


 \section{Analysis}\label{analysis}
We first establish a few lemmas to prove Proposition~\ref{proposition1} which will provide an upper bound on the number of break points. Then we establish Propositions~\ref{proposition2} and~\ref{proposition3} which will give lower bounds on the number of break points in terms of the approximation error. Combining these propositions will give Theorems~\ref{theorem1} and~\ref{theorem2}. Finally, we prove Theorem~\ref{theorem3}.
 \begin{definition}[Intermediate set of units]
Given $\cNN\in {\mathbb{F}}_\sigma $ and $\cU \subseteq \cH  $ we define the set of hidden units that lie on a path between the input and $\cU$ as 
 \begin{equation*}
    \mathrm{in}(\cU)\defeq \Big \{ v \in \cH  \backslash \cU | \exists i \in {\mathcal{I}}_f, u \in \cU \: \mathrm{s.t.} \: v \in (i \rightarrow u) \Big \} 
 \end{equation*}
 where $(i\rightarrow u)$ denotes the set of intermediate hidden nodes on the path from $i$ to $u$.   
 \end{definition}
 For instance, in Fig.~\ref{fig1} we have $$\mathrm{in}(\{u_{32} \})=\{u_{11},u_{12},u_{21},u_{23} \}.$$ 
 
 The following lemma follows from the above definition.
 \begin{lemma}\label{inin}
 Given $\cU\subseteq \cH $ we have 
  $$\mathrm{in}(\mathrm{in}(\mathcal U)=\emptyset$$ and
 \begin{equation*}
   \mathrm{in}(u) \subseteq  (\cU \cup  \mathrm{in}(\cU)) 
 \end{equation*}
 for any  $ u \in \cU $. 
 \end{lemma}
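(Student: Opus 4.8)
The plan is to recast the operator $\mathrm{in}(\cdot)$ in terms of the ancestor relation of the network and then read both assertions off from transitivity. Say that a hidden unit $v$ is a \emph{proper ancestor} of a unit $w$ if there is a directed path $v\rightarrow\cdots\rightarrow w$ of length at least one. Since $f$ is feedforward it has no directed cycle, so this relation is a strict partial order: it is transitive and irreflexive, and no unit is a proper ancestor of itself. Moreover every hidden unit has depth at least one and hence is reachable from some input, so the defining condition ``$v\in(i\rightarrow u)$ for some $i\in{\mathcal{I}}_f$'' is equivalent to ``$v$ is a proper ancestor of $u$''. Writing $\mathrm{anc}(\cU)$ for the set of hidden units that are proper ancestors of at least one $u\in\cU$, this yields the clean description $\mathrm{in}(\cU)=\mathrm{anc}(\cU)\setminus\cU$, which I would first sanity-check against the worked value $\mathrm{in}(\{u_{32}\})$.

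For the inclusion $\mathrm{in}(u)\subseteq\cU\cup\mathrm{in}(\cU)$, fix $u\in\cU$ and take $w\in\mathrm{in}(\{u\})=\mathrm{anc}(\{u\})$, so $w$ is a proper ancestor of $u$. If $w\in\cU$ there is nothing to prove; otherwise $w\in\mathrm{anc}(\cU)\setminus\cU=\mathrm{in}(\cU)$. In either case $w\in\cU\cup\mathrm{in}(\cU)$, so this half is immediate from the reformulation.

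For $\mathrm{in}(\mathrm{in}(\cU))=\emptyset$ I would argue by contradiction. If $w\in\mathrm{in}(\mathrm{in}(\cU))$ then $w\notin\mathrm{in}(\cU)$ and $w$ is a proper ancestor of some $v\in\mathrm{in}(\cU)$, while $v$ is a proper ancestor of some $u\in\cU$; transitivity makes $w$ a proper ancestor of $u$, so $w\in\mathrm{anc}(\cU)$. Together with $w\notin\mathrm{in}(\cU)=\mathrm{anc}(\cU)\setminus\cU$ this forces $w\in\cU$, and the crux---the step I expect to be the real obstacle---is to exclude exactly this: no element of $\cU$ may be a proper ancestor of an element of $\mathrm{in}(\cU)$. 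This amounts to requiring that $\cU$ contain no ancestor--descendant pair, i.e.\ that $\cU$ be an antichain for the ancestor order ($\cU\cap\mathrm{anc}(\cU)=\emptyset$, whence $\mathrm{in}(\cU)=\mathrm{anc}(\cU)$ and $\mathrm{anc}(\cU)$ is closed under passing to further ancestors, which immediately gives emptiness). For the sets $\cU$ to which the lemma is applied this holds automatically---e.g.\ when $\cU$ lies in a single depth layer $\cH^i$, since a directed path strictly increases depth and so cannot join two units of equal depth---so I would pin down that the intended scope of $\cU$ guarantees the antichain property. For a completely arbitrary $\cU$ the emptiness can genuinely fail (take $\cU=\{a,c\}$ with a path $a\rightarrow b\rightarrow c$ and $b\notin\cU$, giving $\mathrm{in}(\cU)=\{b\}$ and $\mathrm{in}(\{b\})=\{a\}$), so locating and invoking this hypothesis is where the actual care lies.
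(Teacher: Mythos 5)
The paper offers no argument for this lemma at all---it is introduced with ``the following lemma follows from the above definition''---so there is no official proof to compare against. Your reformulation $\mathrm{in}(\cU)=\mathrm{anc}(\cU)\setminus\cU$ (which does check out against the worked example $\mathrm{in}(\{u_{32}\})=\{u_{11},u_{12},u_{21},u_{23}\}$) and the resulting two-line derivation of $\mathrm{in}(u)\subseteq \cU\cup\mathrm{in}(\cU)$ are exactly the intended argument, and that half is complete. More importantly, your diagnosis of the first assertion is correct: for arbitrary $\cU\subseteq\cH$ the claim $\mathrm{in}(\mathrm{in}(\cU))=\emptyset$ is false, and your chain $x\rightarrow a\rightarrow b\rightarrow c$ with $\cU=\{a,c\}$ is a legitimate feedforward network in the paper's model, giving $\mathrm{in}(\cU)=\{b\}$ and $\mathrm{in}(\mathrm{in}(\cU))=\{a\}\neq\emptyset$. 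Two remarks. First, ``amounts to'' overstates the antichain condition slightly: $\cU\cap\mathrm{anc}(\cU)=\emptyset$ is sufficient (and covers $\cU\subseteq\cH^i$), but it is not necessary---a downward-closed set such as $\bigcup_{j\leq i}\cH^{j}$ contains many ancestor--descendant pairs yet satisfies the claim vacuously because $\mathrm{in}(\cU)=\emptyset$ already. Second, and this confirms your guess about the intended scope: the emptiness half of the lemma is never invoked downstream. The proof of Lemma~\ref{lemma4} uses only the inclusion $\mathrm{in}(v)\subseteq\mathrm{in}(\cU)\cup\cU$; the hypotheses $\mathrm{in}(\cU_2)=\emptyset$ in Lemmas~\ref{lemma2} and~\ref{lemma3} are assumptions, not consequences of this lemma; and the fact $\mathrm{in}\bigl(\bigcup_{j=1}^{i}\cH^{j}\bigr)=\emptyset$ needed in Proposition~\ref{proposition1} is a different (true) statement about downward-closed unions of depth layers, justified there directly from the definitions. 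So the defect you located is real but harmless to the paper's results, and your proof of the half that is actually used is correct.
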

 \begin{definition}[State]\label{def2}
 Any $\sigma\in \Sigma_{t} $ partitions the real line (its input) into $t$ intervals $I_1,I_2,...,I_t$ such that on each of these intervals $\sigma$ is affine.
  The state of a unit with activation function $\sigma$ is defined to be $s\in \{1,2,\ldots,t\}$ if its input belongs to $I_s$. By extension, the state of $\cU \subseteq \cH $ is defined to be the vector of length $|\cU|$ whose components are the state of each unit in $\cU$.  
 \end{definition}
 The following definition is inspired by the notion of pattern transition introduced in~\cite{6}:
 \begin{definition}[Transition]
 Let $f\in \mathbb F_{\sigma}$,  $\cU \subseteq \cH $ and  $\vect{x},\vect{y} \in \mathcal{R}$.
 Let $\vect z_\alpha=(1-\alpha)\vect{x}+\alpha\vect{y}$  be a parametrization of 
 the line segment $[\vect x,\vect y]$ as $\alpha$ goes from $0$ to $1$. 
 We say that the state
 of $\cU$ experiences a transition at point $\vect z_{\alpha^*}$ for some $\alpha^*\in (0,1]$ if the state vector of $\cU$ changes at $\vect z_{\alpha^*}$ while the state vector of $\text{in}(\cU)$ does not
 change at $\vect z_{\alpha^*}$. The number of state transitions of $\cU$ on the segment 
 $[\vect x,\vect y]$, denoted by 
 $N_{\vect{x} \rightarrow \vect{y}} (\cU)$, is defined to be the number of state transitions of $\cU$ as the input changes from $\vect{x}$ to $\vect{y}$ on $\vect z_\alpha$. 
 If $\mathrm{in}(\cU)=\emptyset$, then $N_{\vect{x} \rightarrow \vect{y}} (\cU)$ is defined to be the number of state transitions of $\cU$ as the input
 changes from $\vect x$ to $\vect y$. 
  \end{definition}
 
 Note that  if the state vectors of both $\cU$ and $\text{in}(\cU)$ change at  $\alpha$,  $N_{\vect{x} \rightarrow \vect{y}} (\cU)$ does not
 change at that  $\alpha$. For example,  consider the neural network $f$ in Fig.~\ref{fig1}. Suppose that $\cU=\{u_{11}, u_{12}\}$ and suppose that the state of 
 $u_{11}$ and $u_{12}$ changes exactly once along segment $z_\alpha$ for some $\vect x$ and $\vect y$, 
 respectively at $\alpha_1$ and $\alpha_2$. Then $N_{\vect{x} \rightarrow \vect{y}} (\{u_{11}\})=1$ and 
 $N_{\vect{x} \rightarrow \vect{y}} (\{u_{12}\})=1$. If $\alpha_1=\alpha_2$,  $N_{\vect{x} \rightarrow \vect{y}} (\cU)=1$, otherwise 
 $N_{\vect{x} \rightarrow \vect{y}} (\cU)=2$. If $\cU'=\{u_{21}, u_{22}, u_{23}\}$, 
 and  the state of each of 
 $u_{21}$, $u_{22}$ and $u_{23}$ changes exactly once at either $\alpha_1$ or $\alpha_2$, then $N_{\vect{x} \rightarrow \vect{y}} (\cU')=0$ since 
 the state vector of $\mathrm{in}(\cU')=\cU$ has also changed at both $\alpha_1$ and $\alpha_2$.

 \begin{lemma}\label{lemma2}
Given $\cNN\in {\mathbb{F}}_\sigma $ and $\cU_1,\cU_2\subseteq \cH$ such that $\mathrm{in}(\cU_2)=\emptyset$ and $ \mathrm{in}(\cU_1) \subseteq \cU_2 $, we have
 \begin{equation*}
  N_{\vect{x} \rightarrow \vect{y}} \Big( \cU_1 \cup \cU_2\Big) \leq  N_{\vect{x} \rightarrow \vect{y}} \Big( \cU_1\Big) +  N_{\vect{x} \rightarrow \vect{y}} \Big( \cU_2\Big).
 \end{equation*}
 \end{lemma}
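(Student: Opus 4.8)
The plan is to reduce the statement to an elementary inequality between cardinalities of finite sets of ``change points'' along the segment, after first computing $\mathrm{in}(\cU_1\cup\cU_2)$ and using it to strip away the ``$\mathrm{in}$'' correction from the combined count.

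\textbf{Step 1 (the incoming set of the union is empty).} First I would show $\mathrm{in}(\cU_1\cup\cU_2)=\emptyset$. Take any $v\in\mathrm{in}(\cU_1\cup\cU_2)$; then $v\notin\cU_1\cup\cU_2$ and $v$ lies on a path from some input to some $u\in\cU_1\cup\cU_2$. If $u\in\cU_1$, then $v\in\mathrm{in}(\cU_1)\subseteq\cU_2$, contradicting $v\notin\cU_2$; if $u\in\cU_2$, then $v\in\mathrm{in}(\cU_2)=\emptyset$, again a contradiction. Hence no such $v$ exists. Consequently, by the boundary clause of the definition of a transition, $N_{\vect x\rightarrow \vect y}(\cU_1\cup\cU_2)$ simply counts the points $\vect z_{\alpha^*}$, $\alpha^*\in(0,1]$, at which the state vector of $\cU_1\cup\cU_2$ changes, and likewise $N_{\vect x\rightarrow \vect y}(\cU_2)$ counts the points where the state vector of $\cU_2$ changes, since $\mathrm{in}(\cU_2)=\emptyset$ as well.

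\textbf{Step 2 (encode as change-point sets).} Since $\sigma\in\Sigma_t$ has finitely many linear pieces, the state of each unit is a piecewise-constant function of $\alpha$ with finitely many jumps; I would let $C_1,C_2,C_V\subseteq(0,1]$ denote the finite sets at which the state vector of $\cU_1$, of $\cU_2$, and of $\cV\defeq\mathrm{in}(\cU_1)$, respectively, changes. Step~1 then gives $N_{\vect x\rightarrow \vect y}(\cU_1\cup\cU_2)=|C_1\cup C_2|$ (the union's state changes exactly when a unit of $\cU_1$ or of $\cU_2$ does, regardless of any overlap between the two sets) and $N_{\vect x\rightarrow \vect y}(\cU_2)=|C_2|$, while the general clause of the definition gives $N_{\vect x\rightarrow \vect y}(\cU_1)=|C_1\setminus C_V|$, because a transition of $\cU_1$ is precisely a point where $\cU_1$ changes but $\mathrm{in}(\cU_1)=\cV$ does not.

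\textbf{Step 3 (the one inclusion that matters, and conclusion).} The hypothesis $\cV=\mathrm{in}(\cU_1)\subseteq\cU_2$ yields $C_V\subseteq C_2$: any $\alpha^*$ at which some component of the state vector of $\cV$ changes is a point at which that same unit, now viewed inside $\cU_2$, changes. Hence $C_1\setminus C_2\subseteq C_1\setminus C_V$, and the result follows from the identity $|C_1\cup C_2|=|C_2|+|C_1\setminus C_2|$:
\begin{align*}
N_{\vect x\rightarrow \vect y}(\cU_1\cup\cU_2)&=|C_1\cup C_2|=|C_2|+|C_1\setminus C_2|\\
&\le |C_2|+|C_1\setminus C_V|\\
&=N_{\vect x\rightarrow \vect y}(\cU_2)+N_{\vect x\rightarrow \vect y}(\cU_1).
\end{align*}
The only genuinely delicate points are bookkeeping rather than analytic: establishing $\mathrm{in}(\cU_1\cup\cU_2)=\emptyset$ so the combined count carries no correction term, and correctly reading the definition's distinction between a change of $\cU$ and a simultaneous change of $\mathrm{in}(\cU)$. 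Once the three counts are rewritten as $|C_1\cup C_2|$, $|C_2|$, and $|C_1\setminus C_V|$ with $C_V\subseteq C_2$, the inequality is pure set algebra.
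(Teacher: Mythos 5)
Your proof is correct and follows essentially the same route as the paper's: the paper argues pointwise that every transition of $\cU_1\cup\cU_2$ is either a state change of $\cU_2$ (hence a transition of $\cU_2$, since $\mathrm{in}(\cU_2)=\emptyset$) or a state change of $\cU_1$ with $\mathrm{in}(\cU_1)\subseteq\cU_2$ unchanged (hence a transition of $\cU_1$), which is exactly your decomposition $C_1\cup C_2=C_2\cup(C_1\setminus C_2)$ combined with $C_1\setminus C_2\subseteq C_1\setminus C_V$. Your explicit verification that $\mathrm{in}(\cU_1\cup\cU_2)=\emptyset$ is a correct bookkeeping step that the paper leaves implicit.
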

 \begin{proof}
 Suppose $N_{\vect{x} \rightarrow \vect{y}} \Big( \cU_1 \cup \cU_2\Big)$ increases by one at $\alpha=\alpha^*$. If $\cU_2$ undergoes a state transition at $\alpha^*$ then, because $\mathrm{in}(\cU_2)=\emptyset$, we have that $N_{\vect{x} \rightarrow \vect{y}} \Big( \cU_2\Big) $
 also increases by one at $\alpha^*$. Instead, if no state change happens in $\cU_2$ at $\alpha^*$ then, due to the state change of $ \cU_1 \cup \cU_2$ at $\alpha^*$, the state of $\cU_1$ must change as well at $\alpha^*$. Since $\mathrm{in}(\cU_1) \subseteq \cU_2$ and no change in the state of $\cU_2$ is observed at $\alpha^*$ we have that $N_{\vect{x} \rightarrow \vect{y}} \Big( \cU_1\Big)$ necessarily increases by one at $\alpha^{*}$. 
 \end{proof}
 \begin{lemma} \label{lemma3}
Given $\cNN\in {\mathbb{F}}_\sigma $ and $\cU_1,\cU_2\subseteq\cH$ such that $\cU_1 \subseteq \cU_2$ and $\mathrm{in}(\cU_2)=\emptyset$ we have
 \begin{equation*}
 N_{\vect{x} \rightarrow \vect{y}} \Big( \cU_1\Big) \leq N_{\vect{x} \rightarrow \vect{y}} \Big( \cU_2\Big).
 \end{equation*} 
 \end{lemma}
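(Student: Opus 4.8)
The plan is to exhibit an injection from the set of parameter values at which $\cU_1$ experiences a transition into the set of values counted by $N_{\vect{x}\rightarrow\vect{y}}(\cU_2)$; the inequality between the two counts then follows immediately. The whole argument lives at the level of the definitions of \emph{state} and \emph{transition}, so the work is to read off the right elementary consequences of the two hypotheses $\cU_1\subseteq\cU_2$ and $\mathrm{in}(\cU_2)=\emptyset$.

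First I would record two facts. Because $\mathrm{in}(\cU_2)=\emptyset$, the side-condition in the definition of a transition is vacuous for $\cU_2$: every point $\vect z_{\alpha^*}$, $\alpha^*\in(0,1]$, at which the state vector of $\cU_2$ changes is, by definition, a transition of $\cU_2$. Hence $N_{\vect{x}\rightarrow\vect{y}}(\cU_2)$ equals exactly the number of points at which the state vector of $\cU_2$ changes. Second, since $\cU_1\subseteq\cU_2$, the state vector of $\cU_1$ is the restriction of the state vector of $\cU_2$ to a subset of its coordinates; consequently, whenever the state vector of $\cU_1$ changes at some $\alpha^*$, the state vector of $\cU_2$ necessarily changes at $\alpha^*$ as well.

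With these in hand the argument is short. Fix any $\alpha^*$ at which $\cU_1$ undergoes a transition. By definition this requires in particular that the state vector of $\cU_1$ change at $\alpha^*$; by the second fact the state vector of $\cU_2$ then also changes at $\alpha^*$, and by the first fact this $\alpha^*$ is counted by $N_{\vect{x}\rightarrow\vect{y}}(\cU_2)$. Distinct transition points of $\cU_1$ occur at distinct values of $\alpha$, so the assignment $\alpha^*\mapsto\alpha^*$ is injective, yielding $N_{\vect{x}\rightarrow\vect{y}}(\cU_1)\le N_{\vect{x}\rightarrow\vect{y}}(\cU_2)$.

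I do not expect a genuine obstacle; the only point requiring care is definitional bookkeeping—confirming that the ``$\mathrm{in}$ does not change'' clause is automatically met for $\cU_2$ (this is precisely where $\mathrm{in}(\cU_2)=\emptyset$ enters), so that no transition of $\cU_1$ is lost, and that a change in a subset of coordinates forces a change in the full state vector. It is worth noting that the companion containment $\mathrm{in}(\cU_1)\subseteq\cU_2$, which does follow from $\cU_1\subseteq\cU_2$ together with $\mathrm{in}(\cU_2)=\emptyset$, is not needed for this direction of the inequality: the inclusion $\cU_1\subseteq\cU_2$ and the emptiness of $\mathrm{in}(\cU_2)$ already suffice.
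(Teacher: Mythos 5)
Your argument is correct and is essentially the paper's own proof: the paper likewise observes that a transition of $\cU_1$ at $\alpha^*$ forces a state change of $\cU_2$ at $\alpha^*$ (since $\cU_1\subseteq\cU_2$), which is automatically counted as a transition of $\cU_2$ because $\mathrm{in}(\cU_2)=\emptyset$. Your version merely makes the injection and the vacuity of the side-condition explicit, and the closing remark that $\mathrm{in}(\cU_1)\subseteq\cU_2$ is not needed here is accurate.
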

 \begin{proof}
 Suppose $N_{\vect{x} \rightarrow \vect{y}} \Big( \cU_1\Big)$ increases by one at $\alpha^*$. Since $\cU_1 \subseteq \cU_2$ the state of $\cU_2$ changes as well at $\alpha^*$. Since $\mathrm{in}(\cU_2)=\emptyset$ we deduce that $N_{\vect{x} \rightarrow \vect{y}} \Big( \cU_2\Big)$ increases at $\alpha^*$ by one, thereby concluding the proof.
 \end{proof}

\begin{lemma}
\label{lemma4}
Given $\cNN\in {\mathbb{F}}_\sigma $, for any $\cU\subseteq \cH $ we have
 \begin{equation*} 
 N_{\vect{x} \rightarrow \vect{y}} (\cU) \leq \sum \limits_{u \in \cU}^{} N_{\vect{x} \rightarrow \vect{y}}(u).
 \end{equation*}
 \end{lemma}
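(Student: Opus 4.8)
The plan is to prove the inequality by an injective assignment argument: to each transition point of $\cU$ I will attach a transition point of some single unit $u\in\cU$ located at the \emph{same} parameter value $\alpha^*$, and then argue this assignment is injective so the counts obey the claimed bound.

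First I would fix a parameter $\alpha^*\in(0,1]$ at which $\cU$ experiences a transition. By definition the state vector of $\cU$ changes at $\vect z_{\alpha^*}$ while the state vector of $\mathrm{in}(\cU)$ does not. Let $S\subseteq\cU$ be the (nonempty) set of units whose individual state changes at $\alpha^*$, and among these pick one, call it $u$, of minimal depth $d_f(u)$. The next step is to verify that $\{u\}$ itself undergoes a transition at $\alpha^*$, i.e.\ that the state of $\mathrm{in}(\{u\})$ is unchanged there. Take any $v\in\mathrm{in}(\{u\})$. By Lemma~\ref{inin} we have $\mathrm{in}(\{u\})\subseteq\cU\cup\mathrm{in}(\cU)$, so either $v\in\mathrm{in}(\cU)$ or $v\in\cU$. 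If $v\in\mathrm{in}(\cU)$ then $v$ does not change state, since $\mathrm{in}(\cU)$ is frozen at $\alpha^*$ by the transition hypothesis. If instead $v\in\cU$, then since $v$ is an intermediate node on a path from an input to $u$ we have $d_f(v)<d_f(u)$, and minimality of $d_f(u)$ over $S$ forces $v\notin S$, so again $v$ does not change state. Hence no unit of $\mathrm{in}(\{u\})$ changes at $\alpha^*$ while $u$ itself does, which is precisely a transition of $\{u\}$ at $\alpha^*$. (When $\mathrm{in}(\{u\})=\emptyset$ the claim is immediate, as a transition of $\{u\}$ is then just a state change of $u$.)

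Finally, I would package this into a counting argument. Distinct transitions of $\cU$ occur at distinct parameter values $\alpha^*$, and the construction above sends each such $\alpha^*$ to a pair $(u,\alpha^*)$ lying in the set of (unit, transition-parameter) pairs, whose total cardinality is exactly $\sum_{u\in\cU}N_{\vect{x}\rightarrow\vect{y}}(u)$. Because the $\alpha^*$ coordinate is preserved, the map is injective, yielding $N_{\vect{x}\rightarrow\vect{y}}(\cU)\leq\sum_{u\in\cU}N_{\vect{x}\rightarrow\vect{y}}(u)$.

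The step I expect to be the main obstacle is the verification in the second paragraph: one must rule out the possibility that the chosen changing unit $u$ has an ancestor \emph{inside} $\cU$ that also changes state at $\alpha^*$, which would destroy the transition of $\{u\}$. The minimal-depth choice is exactly what neutralizes this, since it splits $\mathrm{in}(\{u\})$ (via Lemma~\ref{inin}) into the frozen part $\mathrm{in}(\cU)$ and the strictly-shallower part contained in $\cU$, and the latter cannot belong to $S$.
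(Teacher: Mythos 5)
Your proposal is correct and follows essentially the same route as the paper's proof: both select, from the set of units of $\cU$ whose state changes at $\alpha^*$, a ``minimal'' one (you via minimal depth $d_f(u)$, the paper via acyclicity guaranteeing some $v$ with $\mathrm{in}(v)$ disjoint from the changing set), and both then use Lemma~\ref{inin} to split $\mathrm{in}(u)$ into the frozen set $\mathrm{in}(\cU)$ and a non-changing part of $\cU$. Your explicit injectivity bookkeeping at the end is a slightly more careful phrasing of the same counting step.
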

 \begin{proof} 
 Suppose that $N_{\vect{x} \rightarrow \vect{y}} (\cU)$ increases by one at $\alpha^{*}$.
 Let $\cV \subseteq  \cU$ be the set of units that experience a transition at  $\alpha^{*}$. Since we have a transition in the state of $\cU$ at $\alpha^{*}$ we have $\cV \neq \emptyset$. Now, because the neural network is cycle-free,\footnote{Recall that throughout the paper neural networks are feedforward.} there exists some $v \in \cV$ such that $\mathrm{in}(v) \cap \cV = \emptyset$.
 We claim that the state of $\mathrm{in}(v)$ has not changed at $\alpha^{*}$. 
 To prove this note that by Lemma~\ref{inin} we have $\mathrm{in}(v) \subseteq \mathrm{in}(\cU) \cup \cU$ and since $\mathrm{in}(v) \cap \cV = \emptyset$ we deduce that
 $\mathrm{in}(v) \subseteq (\mathrm{in}(\cU) \cup \cU\backslash \cV  ).$
 On the other hand  neither $\cU\backslash \cV$ nor $\mathrm{in}(\cU)$ has a transition at $\alpha^{*}$. This implies that $\mathrm{in}(v)$ has  no transition at $\alpha^{*}$ and therefore $N_{\vect{x} \rightarrow \vect{y}}(v)$ increases by one at $\alpha^{*}$. This concludes the proof since $v\in \cU$. 
 \end{proof}

 \begin{lemma}\label{lemma5}
Given $\cNN\in {\mathbb{F}}_\sigma $, for any $u\in \cH $ we have  
 $$  N_{\vect{x} \rightarrow \vect{y}} (u) \leq (t-1) \Big(  N_{\vect{x} \rightarrow \vect{y}} (\mathrm{in}(u))+1 \Big).$$

 \end{lemma}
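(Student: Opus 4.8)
The plan is to use the state transitions of $\mathrm{in}(u)$ to partition the segment $[\vect x,\vect y]$ into pieces on which everything feeding into $u$ is affine, and then bound the number of transitions of $u$ on each piece in isolation. Concretely, I would let $0<\alpha_1<\dots<\alpha_k<1$ with $k=N_{\vect{x}\rightarrow\vect{y}}(\mathrm{in}(u))$ be the parameters at which $\mathrm{in}(u)$ undergoes a state transition along $\vect z_\alpha=(1-\alpha)\vect x+\alpha\vect y$. These $k$ points split the open interval $(0,1)$ into at most $k+1$ open subintervals, and on each subinterval the state vector of $\mathrm{in}(u)$ is constant.

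The crux is the claim that on any one of these subintervals the pre-activation of $u$ (the weighted sum entering $u$) is an affine function of $\alpha$. I would establish this by induction on depth along the feedforward structure: every hidden unit feeding into $u$, whether directly or through a longer path, lies in $\mathrm{in}(u)$ (a direct predecessor $v$ of $u$ sits on a path $i\rightarrow\cdots\rightarrow v\rightarrow u$ and is therefore intermediate, hence in $\mathrm{in}(u)$). When the state of $\mathrm{in}(u)$ is fixed, each such unit operates on a single linear piece of $\sigma$, so its output is a fixed affine map of its own (affine) pre-activation; composing these affine maps, together with the fact that the raw inputs $\vect z_\alpha$ are themselves affine in $\alpha$, shows the pre-activation of $u$ is affine in $\alpha$ on that subinterval.

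Granting affineness, the pre-activation of $u$ is monotone in $\alpha$, so as $\alpha$ sweeps the subinterval its value moves in one direction along $\mathbb{R}$. Since $\sigma\in\Sigma_t$ partitions $\mathbb{R}$ into the $t$ intervals $I_1,\dots,I_t$ separated by exactly $t-1$ boundary points, a monotone traversal visits these intervals in order and hence crosses at most $t-1$ boundaries; each crossing is one state change of $u$. Because $\mathrm{in}(u)$ is constant throughout the open subinterval, every such state change of $u$ is by definition a transition of $u$, so $u$ contributes at most $t-1$ transitions per subinterval. Note also that any state change of $u$ occurring exactly at one of the $\alpha_j$ is excluded, since a transition of $u$ requires $\mathrm{in}(u)$ not to change; thus only the open subintervals contribute. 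Summing over the at most $k+1=N_{\vect{x}\rightarrow\vect{y}}(\mathrm{in}(u))+1$ subintervals yields
\[
N_{\vect{x}\rightarrow\vect{y}}(u)\leq (t-1)\big(N_{\vect{x}\rightarrow\vect{y}}(\mathrm{in}(u))+1\big).
\]

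I expect the main obstacle to be the affineness claim of the second paragraph: one must argue carefully that holding the states of \emph{all} of $\mathrm{in}(u)$ fixed—not merely the immediate predecessors—suffices to make the pre-activation of $u$ affine, and that connections between non-neighbouring layers cause no difficulty. The definition of $\mathrm{in}(u)$ as the full set of intermediate units on input-to-$u$ paths, together with Lemma~\ref{inin}, is exactly what makes this induction go through, so I would lean on those to keep the argument clean.
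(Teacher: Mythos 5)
Your proof is correct and follows essentially the same route as the paper's: partition the segment by the transitions of $\mathrm{in}(u)$, observe that the pre-activation of $u$ is affine in $\alpha$ on each of the resulting $N_{\vect{x}\rightarrow\vect{y}}(\mathrm{in}(u))+1$ pieces, and conclude that an affine (hence monotone) input can cross the boundaries of the partition $I_1,\dots,I_t$ at most $t-1$ times. The paper establishes this last count by a pigeonhole/connectedness contradiction rather than your direct monotone-traversal argument, and is terser than you about why the pre-activation is affine (it simply writes it as $\vect{w}\cdot\vect{x}_i+b$); these are presentational differences only.
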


 \begin{proof} 
 To establish the lemma we show that between transitions of $\mathrm{in}(u)$ there are at most $t-1$ transitions of $u$.
 
 Suppose, by way of contradiction, that at least $t$ transitions in the state of $u$ happen while $\mathrm{in}(u)$ experiences no change. Then there exists an increasing sequence of real numbers $\alpha_1,...,\alpha_{t+1}$ from interval $[0,1]$ and an increasing set of integers $k_1,k_2,...,k_{t+1}$ from $S=\{1,2,...,t\}$, with $k_i\ne k_{i+1}$, such that for particular $\vect{w}\in \mathbb{R}^n$ and $b\in \mathbb{R} $ we have
 \begin{equation*}
 \begin{aligned}
 &\vect{x_i} \defeq (1-\alpha_i)\vect{x}+\alpha_i\vect{y}\\
 &\vect{w}\cdot\vect{x_i}+ b \in I_{k_i} \\
 \end{aligned}
 \end{equation*}  
 where $I_i$ is defined in Definition~\ref{def2}.
 Since $|S|=t$ there exists $i<j$ such that $k_i=k_j$. Now since  $k_i \neq k_{i+1}$ we deduce that $j \neq i+1$ and therefore $j > i+1$. But $\vect{w}\cdot\vect{x_{i+1}}+b$ lies between $\vect{w}\cdot\vect{x_i}+b$ and $\vect{w}\cdot\vect{x_j}+b$ since the sequence $\alpha_1,\alpha_2,...,\alpha_{t+1}$ is increasing.  Since $\vect{w}\cdot\vect{x_j}+b$ and $\vect{w}\cdot\vect{x_i}+b$ belong to $I_{k_i}$, by the connectedness property of the set $I_i$ we deduce that that $\vect{w}\cdot\vect{x_{i+1}}+b \in I_i$. Therefore, we get $k_i=k_{i+1}=k_j$, a contradiction. 
 \end{proof}
  Since a break point of $\cNN\in {\mathbb{F}}_\sigma$ necessarily implies a change in the state of the units we get: 
 
  \begin{lemma}\label{lemma6} Given $(\vect{x},\vect{y}) \in \mathcal{R}^2$ and $\cNN\in {\mathbb{F}}_\sigma$ we have
  \begin{equation*}
 B_ {\vect{x}\rightarrow \vect{y}}(\cNN) \leq N_{\vect{x}\rightarrow \vect{y}}(\cH).
  \end{equation*}
  \end{lemma}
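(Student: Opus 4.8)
The plan is to prove the contrapositive pointwise: I will show that at any $\alpha_0 \in (0,1)$ at which the full state vector of $\cH$ does not change, the map $\alpha \mapsto f(\vect z_\alpha)$ is differentiable, so that $\vect z_{\alpha_0}$ cannot be a break point. Since $\mathrm{in}(\cH)=\emptyset$ (because $\cH\backslash\cH=\emptyset$), the quantity $N_{\vect{x}\rightarrow\vect{y}}(\cH)$ simply counts the points $\alpha^*\in (0,1]$ at which the state vector of $\cH$ changes. It will then follow that every break point of $f$ coincides with such a state transition, giving $B_{\vect{x}\rightarrow\vect{y}}(f)\leq N_{\vect{x}\rightarrow\vect{y}}(\cH)$.

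The key technical step is the claim that on any open subinterval $(\alpha',\alpha'')\subseteq(0,1)$ on which the state vector of $\cH$ is constant, the restriction $\alpha\mapsto f(\vect z_\alpha)$ is affine in $\alpha$. I would establish this by induction on the depth of the units. For the base case, the network inputs are the coordinates of $\vect z_\alpha=(1-\alpha)\vect{x}+\alpha\vect{y}$, which are affine in $\alpha$. For the inductive step, take a unit $h\in\cH^i$. Its pre-activation is a fixed weighted sum of the inputs and of the outputs of units of strictly smaller depth, each affine in $\alpha$ by the induction hypothesis; hence the pre-activation is affine in $\alpha$. Because the state of $h$ is constant on $(\alpha',\alpha'')$, this pre-activation remains inside a single interval $I_s$ on which $\sigma$ is affine, so the output of $h$ is an affine function of an affine function of $\alpha$, hence affine. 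Weight-summing the outputs of the top units then shows $f(\vect z_\alpha)$ is affine on $(\alpha',\alpha'')$.

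With the claim in hand the conclusion is immediate. If the state vector of $\cH$ does not change at $\alpha_0\in(0,1)$, then it is constant on a full open neighbourhood of $\alpha_0$, so by the claim $\alpha\mapsto f(\vect z_\alpha)$ is affine there and in particular differentiable at $\alpha_0$; thus $\vect z_{\alpha_0}$ is not a break point. Contrapositively, every break point of $f$ on $]\vect{x},\vect{y}[$ occurs at a point where the state vector of $\cH$ changes, that is, at a state transition of $\cH$. Since distinct break points sit at distinct $\alpha_0$, and each of these lies in $(0,1)\subseteq(0,1]$, counting them yields $B_{\vect{x}\rightarrow\vect{y}}(f)\leq N_{\vect{x}\rightarrow\vect{y}}(\cH)$.

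The main obstacle is the induction establishing affinity on constant-state intervals, together with the subtlety that $\sigma$ need not be continuous. I must argue that the pre-activation of each unit genuinely stays inside a single linear piece $I_s$ throughout the interval — which is exactly what ``constant state'' guarantees — so that the possible jump discontinuities of $\sigma$ at the boundaries between pieces, which could likewise destroy differentiability, can only be located at state-transition points and never in the interior of a constant-state interval.
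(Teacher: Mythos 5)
Your argument is correct and follows exactly the idea the paper uses: the paper dispatches Lemma~\ref{lemma6} with the single remark that a break point of $f$ necessarily implies a change in the state of the hidden units, which is precisely the contrapositive you prove. Your write-up merely makes this rigorous (the depth induction showing $f(\vect z_\alpha)$ is affine on constant-state intervals, and the observation that $\mathrm{in}(\cH)=\emptyset$ so $N_{\vect{x}\rightarrow\vect{y}}(\cH)$ counts all state changes), so it is the same approach carried out in more detail.
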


Propositions~\ref{proposition1} and \ref{proposition2} establish inequalities \eqref{elprimero1} and \eqref{elprimero2} of Theorem~\ref{theorem1}.
 \begin{proposition}\label{proposition1}
  Given $\cNN \in {\mathbb{F}}_\sigma$, $\sigma \in \Sigma_t$, we have
  \begin{equation}\label{upb}
  B_{\vect{x} \rightarrow \vect{y}}(\cNN) \leq 
  \Big(\big(t-1 \big)\omega_f+1\Big)^{d_f} - 1.
  \end{equation}
  \end{proposition}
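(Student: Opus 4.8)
The plan is to bound the number of break points via Lemma~\ref{lemma6}, which gives $B_{\vect{x}\rightarrow\vect{y}}(f) \leq N_{\vect{x}\rightarrow\vect{y}}(\cH)$, and then to estimate $N_{\vect{x}\rightarrow\vect{y}}(\cH)$ by an inductive, layer-by-layer argument organized by depth. Writing $\cH^{\leq i} \defeq \bigcup_{j=1}^{i} \cH^j$ for the set of hidden units of depth at most $i$, the first observation I would make is that $\mathrm{in}(\cH^{\leq i}) = \emptyset$: any hidden unit lying on a path from an input to a unit of depth $\leq i$ has strictly smaller depth, hence already belongs to $\cH^{\leq i}$. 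Similarly $\mathrm{in}(\cH^i) \subseteq \cH^{\leq i-1}$, and for any single unit $u\in\cH^i$ we have $\mathrm{in}(u)\subseteq\cH^{\leq i-1}$.

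With these containments in hand, the core step is to derive a one-step recursion for $M_i \defeq N_{\vect{x}\rightarrow\vect{y}}(\cH^{\leq i})$. Applying Lemma~\ref{lemma2} with $\cU_1=\cH^i$ and $\cU_2 = \cH^{\leq i-1}$ (whose hypotheses $\mathrm{in}(\cU_2)=\emptyset$ and $\mathrm{in}(\cU_1)\subseteq\cU_2$ are exactly the containments above) yields $M_i \leq N_{\vect{x}\rightarrow\vect{y}}(\cH^i) + M_{i-1}$. Then Lemma~\ref{lemma4} bounds $N_{\vect{x}\rightarrow\vect{y}}(\cH^i)$ by $\sum_{u\in\cH^i} N_{\vect{x}\rightarrow\vect{y}}(u)$, Lemma~\ref{lemma5} bounds each $N_{\vect{x}\rightarrow\vect{y}}(u)$ by $(t-1)(N_{\vect{x}\rightarrow\vect{y}}(\mathrm{in}(u))+1)$, and Lemma~\ref{lemma3} (with $\mathrm{in}(u)\subseteq\cH^{\leq i-1}$ and $\mathrm{in}(\cH^{\leq i-1})=\emptyset$) replaces $N_{\vect{x}\rightarrow\vect{y}}(\mathrm{in}(u))$ by $M_{i-1}$. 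Summing over the $\omega_i=|\cH^i|$ units of layer $i$ gives
\[
M_i \leq M_{i-1} + (t-1)\,\omega_i\,(M_{i-1}+1).
\]

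The cleanest way to exploit this is to add $1$ to both sides, which factors the recursion into $M_i + 1 \leq (M_{i-1}+1)\bigl(1+(t-1)\omega_i\bigr)$. Since $\cH^{\leq 0}=\emptyset$ supplies the base case $M_0 = 0$, telescoping over $i=1,\dots,d_f$ produces $N_{\vect{x}\rightarrow\vect{y}}(\cH) = M_{d_f} \leq \prod_{i=1}^{d_f}\bigl(1+(t-1)\omega_i\bigr) - 1$. The final step is the arithmetic–geometric mean inequality applied to the $d_f$ factors $1+(t-1)\omega_i$, whose average equals $1+(t-1)\omega_f$ by the definition $\omega_f=\frac{1}{d_f}\sum_{i}\omega_i$; this converts the product into $\bigl((t-1)\omega_f+1\bigr)^{d_f}$ and, combined with Lemma~\ref{lemma6}, closes the bound.

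I expect the main obstacle to be the bookkeeping that justifies each lemma application — specifically, verifying the three ``$\mathrm{in}$'' containments so that the hypotheses of Lemmas~\ref{lemma2} and~\ref{lemma3} are genuinely met, and being careful that the recursion is driven by the cumulative sets $\cH^{\leq i}$ rather than the individual layers $\cH^i$ (otherwise transitions shared between a layer and its predecessors could be mis-counted). The AM–GM step and the telescoping are then routine once the recursion is written in the factored form $M_i+1 \leq (M_{i-1}+1)\bigl(1+(t-1)\omega_i\bigr)$.
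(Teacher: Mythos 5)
Your proposal is correct and follows essentially the same route as the paper: the same depth-stratified recursion obtained from Lemmas~\ref{lemma2}, \ref{lemma4}, \ref{lemma5}, \ref{lemma3}, closed by Lemma~\ref{lemma6}. The only (cosmetic) difference is in solving the recursion — you telescope the factored form $M_i+1\leq (M_{i-1}+1)(1+(t-1)\omega_i)$ into a product and apply AM--GM once, whereas the paper unrolls it into a sum of elementary symmetric polynomials and bounds each term by the corresponding binomial term, which amounts to the same inequality.
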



\begin{proof}[Proof of Proposition~\ref{proposition1}] 
 Fix $\cNN \in {\mathbb{F}}_\sigma$ where $\sigma \in \Sigma_t$. Referring to  Definition~\ref{dpth}, consider the partition $$\cup_{i=1}^d\cH^i$$ of $\cH$ according to unit depth where $d=d_f$. 
 
 Fix $u \in \cH^{i+1}$, $0 \leq i < d$. From the definitions of $\mathrm{in}(u)$ and $\cH^i$ we get
 \begin{align} \label{eq1}
 &\mathrm{in}(u) \subseteq  \bigcup \limits_{j=1}^{i}\cH^{j} \\
 & \mathrm{in}\Big(\cH^{i+1} \Big) \subseteq  \bigcup\limits_{j=1}^{i} \cH^{j}\nonumber \\
 &\mathrm{in}\Big(\bigcup\limits_{j=1}^{i} \cH^{j} \Big)=\emptyset .
\nonumber 
\end{align}
 Applying Lemma~\ref{lemma2}  with $\cU_1=\cH^{i+1}$ and $\cU_2=\bigcup\limits_{j=1}^{i} \cH^{j} $ we get
 \begin{equation*}
 \begin{aligned}
     &N_{\vect{x} \rightarrow \vect{y}}(\bigcup \limits_{j=1}^{i+1}\cH^{j}) \leq
     N_{\vect{x} \rightarrow \vect{y}}(\bigcup \limits_{j=1}^{i}\cH^{j})  +  N_{\vect{x} \rightarrow \vect{y}}(\cH^{i+1}).
 \end{aligned}
 \end{equation*}
 From Lemma \ref{lemma4}
 \begin{equation*}
 \begin{aligned}
     &N_{\vect{x} \rightarrow \vect{y}}(\bigcup \limits_{j=1}^{i+1}\cH^{j}) \leq 
     N_{\vect{x} \rightarrow \vect{y}}(\bigcup \limits_{j=1}^{i}\cH^{j})+ \sum \limits_{u \in \cH^{i+1}}^{} N_{\vect{x} \rightarrow \vect{y}}(u)
 \end{aligned}
 \end{equation*}
 and applying Lemma~\ref{lemma5} to the previous inequality
 \begin{equation*}
      \begin{aligned}
       N_{\vect{x} \rightarrow \vect{y}}(\bigcup \limits_{j=1}^{i+1}\cH^{j}) &\leq
     N_{\vect{x} \rightarrow \vect{y}}(\bigcup \limits_{j=1}^{i}\cH^{j})\\
     &+ \sum \limits_{u \in \cH^{i+1}}^{} \big( t-1 \big )\Big(N_{\vect{x} \rightarrow \vect{y}}\big(\mathrm{in}(u)\big)+1\Big).
 \end{aligned}
 \end{equation*}
 Then, using \eqref{eq1} and Lemma~\ref{lemma3} we  get
   \begin{align}{\label{eq2}}
   &N_{\vect{x} \rightarrow \vect{y}}(\bigcup \limits_{j=1}^{i+1}\cH^{j})\nonumber\\
   & \leq
     N_{\vect{x} \rightarrow \vect{y}}(\bigcup \limits_{j=1}^{i}\cH^{j})
     + \sum \limits_{u \in \cH^{i+1}}^{} \big( t-1 \big ) \Big ( N_{\vect{x} \rightarrow \vect{y}}\big(\bigcup \limits_{j=1}^{i}\cH^{j}\big)+1\Big)\nonumber\\
     &= \big(\omega_{i+1}(t-1)+1\big)N_{\vect{x} \rightarrow \vect{y}}(\bigcup \limits_{j=1}^{i}\cH^{j})+\omega_{i+1}(t-1).
\end{align}
 For $u \in \cH^1$ we have $\mathrm{in}(u) = \emptyset$ and according to Lemma \ref{lemma5} we deduce that $N_{\vect{x} \rightarrow \vect{y}}(\cH^1) \leq (t-1)\omega_1$. With this initial condition and the recursive relation in \eqref{eq2} we get 
 \begin{align*}
        &N_{\vect{x} \rightarrow \vect{y}}(\bigcup \limits_{j=1}^{d}\cH^{j}) \nonumber\\ 
         & \leq \sum \limits_{j=1}^{d} \Bigg( \prod \limits_{1 \leq \alpha_1 < \alpha_2<...<\alpha_j\leq d}^{} \omega_{\alpha_1}\omega_{\alpha_2}\cdots\omega_{\alpha_j} \big(t-1 \big)^j  \Bigg)\nonumber\\
         & \leq \sum \limits_{j=1}^{d} \Big( {d \choose j}  \big(\omega_f(t-1)\big)^j    \Big) = \Big(\omega_f (t-1)+1\Big)^d-1 \nonumber 
         \end{align*} 
with $\omega_f$ as width of $f$.
 Finally, apply  Lemma~\ref{lemma6} to obtain 
  \begin{equation*}
  B_{\vect{x}\rightarrow \vect{y}}(\cNN) \leq  \Big(\big(t-1\big)\omega_f+1\Big)^{d_f}-1.
  \end{equation*}
\end{proof}

 \begin{proposition}{\label{proposition2}}
 Let $\mathcal R$ be a convex region in $\mathbb R^n$.  For any affine $\varepsilon$-approximation 
 $f :\mathcal{R} \rightarrow \mathbb{R}$ 
 of a function $g\in C^2({\mathcal{R}})$ we have
   \begin{equation}
     B_{\vect{x}\rightarrow \vect{y}}(f) \geq \frac{||\vect{x} -\vect{y} ||_2}{4\sqrt{\varepsilon}} \cdot \Psi(g,\vect{x},\vect{y})  -1
 \end{equation}
 where $\Psi(g,\vect{x},\vect{y})$ is defined in \eqref{psii}.

 
 \end{proposition}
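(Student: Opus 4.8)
The plan is to reduce the statement to a one–dimensional approximation problem along the segment $[\vect{x},\vect{y}]$ and then exploit the curvature of $g$ there. Set $\vect{z}_\alpha = (1-\alpha)\vect{x}+\alpha\vect{y}$ and define the restrictions $\tilde g(\alpha)\defeq g(\vect z_\alpha)$ and $\tilde f(\alpha)\defeq f(\vect z_\alpha)$ for $\alpha\in[0,1]$. Since $f$ is piecewise linear, $\tilde f$ is a piecewise linear function of $\alpha$ whose affine pieces partition $[0,1]$ and whose number of pieces is exactly $\bar B_{\vect x\to\vect y}(f)=B_{\vect x\to\vect y}(f)+1$. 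The $\varepsilon$-approximation hypothesis gives $|\tilde f(\alpha)-\tilde g(\alpha)|\le\varepsilon$ for all $\alpha$, and by the chain rule $\tilde g\in C^2$ with $\tilde g''(\alpha)=(\vect y-\vect x)^\top\nabla^2 g(\vect z_\alpha)(\vect y-\vect x)$.

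First I would lower bound $|\tilde g''(\alpha)|$ in terms of $\Psi$. Writing $\vect d=\vect y-\vect x$ and $H(\alpha)=\nabla^2 g(\vect z_\alpha)$, the Rayleigh quotient inequalities give $\alpha_2(\alpha)\|\vect d\|_2^2\le \vect d^\top H(\alpha)\vect d\le \alpha_1(\alpha)\|\vect d\|_2^2$. If $\Psi(g,\vect x,\vect y)=0$ the claimed bound reads $B_{\vect x\to\vect y}(f)\ge -1$ and holds trivially, so I may assume $\Psi>0$. Then $\max\{0,\gamma(\alpha)\delta(\alpha)\}\ge\Psi^2>0$ for every $\alpha$ (infimum definition of $\Psi$), which forces $\delta(\alpha)=+1$ and $\gamma(\alpha)>0$; that is, $\alpha_1(\alpha)$ and $\alpha_2(\alpha)$ share the same sign, so $H(\alpha)$ is definite for every $\alpha$. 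By continuity of the eigenvalues along the connected segment, $H(\alpha)$ cannot pass from positive to negative definite without $\alpha_1$ or $\alpha_2$ vanishing (which would give $\delta=0$), so $\tilde g''$ keeps a constant sign on $[0,1]$, and in either the positive- or negative-definite case $|\tilde g''(\alpha)|\ge\gamma(\alpha)\|\vect d\|_2^2\ge\Psi^2\|\vect x-\vect y\|_2^2=:m$ for all $\alpha$.

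Next I would bound the length of any single affine piece $[\alpha',\alpha'']$ of $\tilde f$, with $\ell\defeq\alpha''-\alpha'$. On such a piece $h\defeq\tilde g-\tilde f$ satisfies $h''=\tilde g''$ (as $\tilde f$ is affine), so $h''$ has constant sign with $|h''|\ge m$ while $|h|\le\varepsilon$. Evaluating the second difference at the midpoint $c=(\alpha'+\alpha'')/2$ gives $h(\alpha'')-2h(c)+h(\alpha')=\int_0^{\ell/2}\!\int_{-s}^{s}h''(c+r)\,dr\,ds$, whose absolute value is at least $m\ell^2/4$ and at most $4\varepsilon$ by the triangle inequality. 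Hence $\ell\le 4\sqrt{\varepsilon/m}=4\sqrt{\varepsilon}\,\big/\big(\Psi(g,\vect x,\vect y)\,\|\vect x-\vect y\|_2\big)$.

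Finally, since the affine pieces partition $[0,1]$ their lengths sum to $1$, so the number of pieces is at least $\|\vect x-\vect y\|_2\,\Psi(g,\vect x,\vect y)\big/(4\sqrt\varepsilon)$; as this number equals $B_{\vect x\to\vect y}(f)+1$, rearranging yields the claim. The main obstacle is the middle step: converting the sign condition on the extreme eigenvalues (the quantity $\gamma(\alpha)\delta(\alpha)$) into a uniform, \emph{constant-sign} lower bound on $|\tilde g''|$, which is precisely where the definiteness-plus-continuity argument is needed and where the trivial case $\Psi=0$ must be separated out. The per-piece midpoint estimate and the concluding count are then routine.
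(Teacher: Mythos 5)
Your proof is correct and arrives at the same constant as the paper's, but the heart of the argument --- the per-piece estimate --- is carried out by a different device. Both proofs share the same skeleton: split $[\vect x,\vect y]$ into the $s=B_{\vect{x}\rightarrow\vect{y}}(f)+1$ sub-segments on which $f$ is affine, show each sub-segment has (Euclidean) length at most $4\sqrt{\varepsilon}/\Psi$, and sum the lengths. For the per-piece step the paper compares $g$ with the chord through its endpoint values, picks the mean-value point $r_i^*$ where the tangent is parallel to that chord, and Taylor-expands with Lagrange remainder; the Hessian then enters only through its value at two intermediate points, so the pointwise Rayleigh bound $|\vect d^{T}\nabla^2 g\,\vect d|\geq \|\vect d\|_2^2\,\max\{0,\gamma\delta\}$ suffices and no control of the sign of the curvature along the piece is needed. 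You instead work with $h=\tilde g-\tilde f$ and a midpoint second difference expressed as a double integral of $h''=\tilde g''$, which genuinely requires $\tilde g''$ to keep a constant sign on the piece to prevent cancellation; you correctly identify this as the crux and supply it by splitting off the trivial case $\Psi=0$ and then noting that $\Psi>0$ forces both extreme eigenvalues to be nonzero of equal sign at every point, so by continuity the Hessian stays definite of one fixed sign along the whole segment. The trade-off is that the paper's route sidesteps the global sign argument entirely, while yours replaces the MVT-plus-Taylor bookkeeping with a cleaner integral estimate and makes transparent why the $\max\{0,\cdot\}$ truncation in the definition of $\Psi$ is the right quantity; both yield $\Psi^2\|\vect x_{i+1}-\vect x_i\|_2^2\leq 16\varepsilon$ per piece and hence the identical final bound.
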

  \begin{proof}[Proof of Proposition \ref{proposition2}] 
We partition $\mathcal{R}$ into \emph{convex} subregions $\mathcal{R}_i$, such that in each subregion  
$f(\vect x)$ is an affine function. 
These convex subregions partition a segment $[\vect{x} ,\vect{y} ]$ into  sub-segments with end points 
$\Big\{\vect{x} _0,\vect{x} _1,...,\vect{x} _s \Big \}$, where $\vect{x} _0=\vect{x} , \vect{x} _s=\vect{y} $ and  $s = B_{\vect{x} \rightarrow \vect{y}}(f)+1$.
In the sub-segment $i\in \{0,1,...,s-1\}$, 
 \begin{equation} \label{eq4}
f(\vect{x} )=\vect{p}_i .\vect{x} +q_i, \quad  \vect{x}  \in [\vect{x} _i,\vect{x} _{i+1}], 
 \end{equation}
 for some $\vect p_i$ and $\vect q_i$.  Let  $\vect{x} _i(r)=(1-r)\vect{x} _i+r\vect{x} _{i+1}$, 
 $r \in [0,1] $, and define
\begin{align*}
 &f_i(r)=(1-r)g(\vect{x} _i)+rg(\vect{x} _{i+1}),\\
 &h_i(r)=g\big (\vect{x} _i(r) \big),\\
& l_i(r) =f(\vect{x}(r)). 
     \end{align*}
      From the definition of $\varepsilon$-approximation, $||h_i(r)-l_i(r) ||_{\infty} \leq \varepsilon$. Thus
     \begin{align} 
  ||f_i(r)-&h_i(r) ||_{\infty} \leq ||f_i(r)- l_i(r)  ||_{\infty} + ||l_i(r) -h_i(r)  ||_{\infty} \notag\\
     &\overset{(a)}{\leq} \max\bigl\{|f_i(0)-l_i(0)|, |f_i(1)-l_i(1)|\bigr\} +\varepsilon \notag
     \\
     &\leq 2\varepsilon, \label{rel4}  
     \end{align}
     where $||k(r)||_{\infty}=\sup \limits_{0\leq r \leq 1}^{}k(r)$ and step $(a)$ follows because $f_i(r)$ and $l_i(r)$ are both line segments and 
     the maximum distance between them is achieved 
     at end points. 
     
 
 As $h(r)$ on $(0,1)$ is differentiable so there exists $r^{*}_i \in (0,1)$ such that $h'_i(r^{*}_i)=h_i(1)-h_i(0)$. Consider $\vect{x} ^{*}_i=(1-r^{*}_i)\vect{x} _i+r^{*}_i\vect{x} _{i+1}$. 
 From \eqref{rel4} we obtain
\begin{align*}
     &  |(1-r_i^{*})\big ( g(\vect{x} _i)- g(\vect{x} _{i+1})\big) -g(\vect{x} ^{*}_{i})+g(\vect{x} _{i+1})| \leq 2 \varepsilon, \\
     &|r_i^{*}\big ( g(\vect{x} _{i+1})- g(\vect{x} _{i})\big) +g(\vect{x} _{i})-g(\vect{x} _{i}^{*})| \leq 2 \varepsilon.   
 \end{align*}

Then, from the definition of $r^{*}_i$ we have
 \begin{equation} \label{rel5}
     \begin{aligned}
      &  |(r_i^{*}-1)\nabla g(\vect{x} ^{*}_i).(\vect{x} _{i+1}-\vect{x} _{i}) -g(\vect{x} ^{*}_{i})+g(\vect{x} _{i+1})| \leq 2 \varepsilon 
    \end{aligned}
    \end{equation}
        \begin{equation} \label{rel6}
   \begin{aligned} 
      &  |r_i^{*}\nabla g(\vect{x} ^{*}_i).(\vect{x} _{i+1}-\vect{x} _{i}) -g(\vect{x} ^{*}_{i})+g(\vect{x} _{i})| \leq 2 \varepsilon .
     \end{aligned}
 \end{equation}
Since $g\in C^2({\mathcal{R}})$ a Taylor expansion of $g(\vect{x}_i)$ and $g(\vect{x}_{i+1})$ around $x^*_{i}$ gives
%
 \begin{align*}
&g(\vect{x} _i)=g(\vect{x} _i^{*})-r^{*}_i \nabla g\big( \vect{x} ^{*}_i\big).(\vect{x} _{i+1}-\vect{x} _i) 
\\ &+\frac{{r^{*}_i}^2}{2}(\vect{x} _{i+1}-\vect{x} _i)^T \nabla^2 g\big( \vect{x}_{i}(\alpha_i)\big)(\vect{x} _{i+1}-\vect{x} _i), \\
 &g(\vect{x} _{i+1})=g(\vect{x} _i^{*})+(1-r^{*}_i) \nabla g\big( \vect{x} ^{*}_i\big).(\vect{x} _{i+1}-\vect{x} _i) \\ &+\frac{{(1-r^{*}_i)}^2}{2}(\vect{x} _{i+1}-\vect{x} _i)^T \nabla^2 g\big(\vect{x}_i(\beta_i) \big)(\vect{x} _{i+1}-\vect{x} _i),
 \end{align*}
where $0 \leq \alpha_i \leq r^{*}_{i} \leq \beta_{i} \leq 1$.
 
 Substituting the above relations in inequalities \eqref{rel5} and ~\eqref{rel6} we get
 \begin{equation} \label{eq5}
     |{(1-r^{*}_i)}^2(\vect{x} _{i+1}-\vect{x} _i)^T \nabla^2 g\big(\vect{x}_{i}(\beta_i) \big)(\vect{x} _{i+1}-\vect{x} _i)| \leq 4\varepsilon, 
 \end{equation}
 \begin{equation} \label{eq6}
     |{r^{*}_i}^2(\vect{x} _{i+1}-\vect{x} _i)^T \nabla^2 g\big(\vect{x}_{i}(\alpha_i) \big)(\vect{x} _{i+1}-\vect{x} _i)| \leq 4\varepsilon .
 \end{equation}
 Use the \emph{Rayleigh quotient} and the definitions of $\theta(\alpha),\gamma(\alpha)$ to obtain
  \begin{align*}
 &|\frac{(\vect{x} _{i+1}-\vect{x} _i)^T \nabla^2 g\big(\vect{x}_{i}(\alpha_i) \big)(\vect{x} _{i+1}-\vect{x} _i)}{(\vect{x} _{i+1}-\vect{x} _{i})^T (\vect{x} _{i+1}-\vect{x} _{i})}| \\
 &\geq \inf \limits_{0 \leq \alpha \leq 1}^{}\Big ( {\max \big\{0, \theta(\alpha)\gamma(\alpha) \big\} } \Big).
 \end{align*}
 Combining the above inequality with ~\eqref{eq5} and ~\eqref{eq6} and the fact that ${r^{*}_{i}}^2+(1-r^{*}_{i})^2 \geq \frac{1}{2}$ we get 
 \begin{align*}
 {{||\vect{x} _{i+1}-\vect{x} _{i}||_2}^2}.\inf \limits_{0 \leq \alpha \leq 1}^{}\Big ( {\max \big\{0, \theta(\alpha)\gamma(\alpha) \big\} } \Big) \leq 16 \varepsilon  . 
 \end{align*}
 Accordingly,
 \begin{align*}
 \sum \limits_{i=0}^{s-1} \Bigg ( \frac{{||\vect{x} _{i+1}-\vect{x} _{i}||_2}}{4\sqrt{\varepsilon}}.\sqrt{\inf \limits_{0 \leq \alpha \leq 1}^{}\Big ( {\max \big\{0, \theta(\alpha)\gamma(\alpha) \big\} } \Big)} \Bigg) \leq s,
 \end{align*}
which gives
 \begin{align*}
B_{\vect{x} \rightarrow \vect{y}}(f) \geq   \frac{{||\vect{x} -\vect{y} ||_2}}{4\sqrt{\varepsilon}}\Psi(g,\vect{x},\vect{y}) -1 .  
 \end{align*}
 \end{proof}
 \begin{proposition}{\label{proposition3}} 
 Let $g:[0,1]^n \rightarrow \mathbb{R}$ be such that $D^{J}(g)(\vect{x}) \leq \delta$ for any $\vect{x} \in [0,1]^n  $ and any multi-index $J $ such that $|J|=3$. Then, for any affine $\varepsilon$-approximation $f$ 
 %

 
 %
 %
 \begin{align*}
 B_{\vect{x} \rightarrow \vect{y}} (f) \geq   \sqrt{ \frac{\Big ( \max \limits_{ \vect{x} \in [0,1] ^ n } ^ {}\big| {\Delta(g)(\vect{x})}\big| \cdot n^{-1}  - \delta \cdot n^\frac{3}{2} \Big)^+}{16\varepsilon}}-1  
 \end{align*} 
 for any $\vect{x},\vect{y} \in [0,1]^n$, where $\Delta$ denotes the Laplace operator \eqref{eq:laplacian}.
  \end{proposition}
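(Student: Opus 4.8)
The plan is to reduce the statement to a lower bound on the number of break points along a single, carefully chosen segment and then to re-run the local estimate already established in the proof of Proposition~\ref{proposition2}. Recall that that proof shows, on each linear piece $[\vect x_i,\vect x_{i+1}]$ of $f$ along a segment, the inequality $\|\vect x_{i+1}-\vect x_i\|_2^2\cdot m\le 16\varepsilon$, where $m$ is any lower bound, valid on the whole segment, for the absolute value of the second derivative of $g$ in the segment's direction; summing $\sum_i\|\vect x_{i+1}-\vect x_i\|_2=\|\vect x-\vect y\|_2$ over the $s=B_{\vect x\rightarrow\vect y}(f)+1$ pieces then yields $B_{\vect x\rightarrow\vect y}(f)\ge \frac{\|\vect x-\vect y\|_2}{4\sqrt\varepsilon}\sqrt m-1$. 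Thus the whole task is to exhibit a segment of controlled length along which the directional second derivative of $g$ is bounded below, in absolute value, by a quantity involving the Laplacian.

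The new ingredient, compared with Proposition~\ref{proposition2} where $m$ came from the eigenvalues of the Hessian, is to bring in $\Delta g=\operatorname{tr}\nabla^2 g$ through an averaging identity over the diagonal directions of the cube. Let $\vect x_0\in[0,1]^n$ attain $\max_{\vect x}|\Delta g(\vect x)|$, and for a sign vector $\vect s\in\{\pm1\}^n$ set $\vect u_{\vect s}=\tfrac{1}{\sqrt n}\vect s$. Since $\tfrac{1}{2^n}\sum_{\vect s}s_js_k=\mathbf 1\{j=k\}$, averaging the directional second derivative over all $2^n$ sign patterns gives $\tfrac{1}{2^n}\sum_{\vect s}\vect u_{\vect s}^{T}\nabla^2 g(\vect x_0)\vect u_{\vect s}=\tfrac1n\Delta g(\vect x_0)$, the cross terms cancelling. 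Because the mean of these numbers equals $\tfrac1n\Delta g(\vect x_0)$, at least one sign pattern $\vect s^\ast$ satisfies $|\vect u_{\vect s^\ast}^{T}\nabla^2 g(\vect x_0)\vect u_{\vect s^\ast}|\ge \tfrac1n|\Delta g(\vect x_0)|=\tfrac1n\max_{\vect x}|\Delta g(\vect x)|$.

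Next I would propagate this bound along the direction $\vect u:=\vect u_{\vect s^\ast}$ using the third-order hypothesis. Writing $\phi(\tau)=\vect u^{T}\nabla^2 g(\vect z+\tau\vect u)\vect u$, one has $\phi'(\tau)=\sum_{i,j,k}u_iu_ju_k\,\partial_i\partial_j\partial_k g$, so $|\phi'(\tau)|\le\delta\big(\sum_i|u_i|\big)^3=\delta\,n^{3/2}$, using $\sum_i|u_i|=\sqrt n$. Hence on any segment of length $1$ in direction $\vect u$ passing through $\vect x_0$ the directional second derivative keeps absolute value at least $m:=\big(\tfrac1n\max_{\vect x}|\Delta g(\vect x)|-\delta n^{3/2}\big)^+$ (the positive part being exactly what guarantees no sign change, and what produces the $(\cdot)^+$ in the statement). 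Feeding $\|\vect x-\vect y\|_2=1$ and this $m$ into the Proposition~\ref{proposition2} summation gives the claimed inequality.

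The main obstacle I anticipate is not the estimates but the geometry: I must place a length-$1$ segment in the diagonal direction $\vect u_{\vect s^\ast}$ inside $[0,1]^n$ so that it contains $\vect x_0$, and near a corner the chord of the cube in a diagonal direction can be shorter than $1$. I would handle this by centering the segment and, if $\vect x_0$ lies too close to the boundary, replacing it by a nearby interior maximizer (using continuity of $\Delta g$), or --- cleaner still --- by taking $\vect u$ to be the coordinate direction $\vect e_{k^\ast}$ maximizing $|\partial^2_{k^\ast}g(\vect x_0)|$; since $\operatorname{tr}\nabla^2 g=\Delta g$ forces $\max_k|\partial_k^2 g(\vect x_0)|\ge\tfrac1n|\Delta g(\vect x_0)|$, the coordinate-axis segment through $\vect x_0$ automatically has length $1$, lies in the cube, and only needs variation $\le\delta$, which even sharpens the error term from $\delta n^{3/2}$ to $\delta$ and a fortiori proves the stated bound.
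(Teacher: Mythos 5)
Your proof is correct, and it shares the paper's overall skeleton: both arguments produce one well-chosen unit-length segment and feed it into the per-piece estimate $\|\vect x_{i+1}-\vect x_i\|_2^2\cdot m\le 16\varepsilon$ established inside the proof of Proposition~\ref{proposition2}, which you restate accurately. The step that links the Laplacian to a directional second derivative is, however, genuinely different. The paper sets $\vect z=\argmax_{\vect x}\rho\big(\nabla^2 g(\vect x)\big)$, moves along a unit eigenvector $\vect u$ of $\nabla^2 g(\vect z)$ attaining $\rho$, lower-bounds $\big|\vect u^T\nabla^2 g(\vect w)\vect u\big|$ at an \emph{arbitrary} point $\vect w$ of the cube via $|\mathrm{tr}(AB)|\le\|A\|_F\|B\|_F$ and $\|\vect w-\vect z\|\le\sqrt n$, and only at the end converts the spectral radius to the Laplacian through $|\mathrm{tr}(\nabla^2 g)|\le n\,\rho(\nabla^2 g)$. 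Because the perturbation term $\delta n^{3/2}$ is charged against the full diameter of the cube, the paper's segment need not pass through $\vect z$, and a unit segment in any direction $\vect u$ fits inside $[0,1]^n$ (take $a_i=0$ when $u_i\ge 0$ and $a_i=1$ otherwise, using $|u_i|\le 1$), so the geometric obstacle you flag does not actually arise in the paper's route. Your route replaces the eigenvector by a pigeonhole/averaging argument over sign (or coordinate) directions at the maximizer of $|\Delta g|$ and propagates the bound by directly estimating the third directional derivative; the coordinate-axis variant is the cleanest of all, trivially stays in the cube, needs only $|\partial_k^3 g|\le\delta$, and in fact yields the sharper error term $\delta$ in place of $\delta n^{3/2}$, which implies the stated inequality a fortiori. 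Like the paper's proof, yours establishes the bound only for the particular segment constructed rather than literally ``for any $\vect x,\vect y$,'' but that is all Theorem~\ref{theorem2} needs, since Proposition~\ref{proposition1} holds for every segment.
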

 \begin{proof}[Proof of Proposition~\ref{proposition3}]
 Define
 \begin{equation*}
\vect{z} \defeq \argmax \limits_{ \vect{x} \in \mathcal{R} }^{}{ \rho \big( \nabla^2 g(\vect{x})  \big) }
 \end{equation*}
 where $\rho(\cdot)$ denotes the spectral radius. Let $\vect{u}$ be a normalized eigenvector corresponding to
 an eigenvalue $\lambda$ where $|\lambda|=\rho \big( \nabla^2 g(\vect{z})  \big)$, {\it{i.e.}}, 
 \begin{equation}
 \nabla^2 g(\vect{z}) \vect{u}  = \lambda \vect{u},\quad ||\vect u||=1.
 \end{equation}
Consider any segment $[\vect{x},\vect{y}]$ in $\mathcal{R}$ in the direction of $\vect u$, {\it{i.e.}}, such that ${\vect{x}-\vect{y}}=\vect{u}$.
The convex subregions of $f$, defined in the proof of Proposition \ref{proposition2}, divide this segment into sub-segments with end points $\{\vect{x}_0, \vect{x}_1,...,\vect{x}_s \}$ where $\vect{x}_0=\vect{x}, \vect{x}_s=\vect{y}$ and $s=B_{\vect{x}\rightarrow \vect{y}} (f) + 1$. 
  Using the same analysis as in the proof of Proposition \ref{proposition2}, from \eqref{eq4}--\eqref{eq6} we obtain \eqref{eq5} and \eqref{eq6}.
 On the other hand, note that
 \begin{align*}
 &|(\vect{x} _{i+1}-\vect{x} _i)^T \nabla^2 g\big(\vect{x}_{i}(\alpha_i) \big)(\vect{x} _{i+1}-\vect{x} _i)|\\
 & \geq |(\vect{x} _{i+1}-\vect{x} _i)^T \nabla^2 g\big(\vect{z} \big)(\vect{x} _{i+1}-\vect{x} _i)| \\
 &-  |(\vect{x} _{i+1}-\vect{x} _i)^T \Big( \nabla^2 g\big(\vect{x}_{i}(\alpha_i) \big) -  \nabla^2 g\big(\vect{z} \big) \Big)   (\vect{x} _{i+1}-\vect{x} _i)| \\
 &=|\lambda|\cdot || \vect{x}_{i+1}-\vect{x}_i ||^2 \\
 &-\big|\mathrm{tr} \big\{ \big( \nabla^2 g\big(\vect{x}_{i}(\alpha_i) \big) - \nabla^2 g\big(\vect{z} \big) \big) (\vect{x}_{i+1}-\vect{x}_i)(\vect{x}_{i+1}-\vect{x}_i)^T           \big\} \big| \\
  &\overset{(a)}{\geq} |\lambda|\cdot || \vect{x}_{i+1}-\vect{x}_i ||^2 \\ 
 &-\big|\big|  \nabla^2 g\big(\vect{x}_{i}(\alpha_i) \big) - \nabla^2 g\big(\vect{z} \big)      \big|\big|_{\mathrm{F}}  
 \big|\big|  (\vect{x}_{i+1}-\vect{x}_i)(\vect{x}_{i+1}-\vect{x}_i)^T    \big|\big|_{\mathrm{F}} \\ 
 &= |\lambda| \cdot || \vect{x}_{i+1}-\vect{x}_i ||^2\\
 &- \big|\big|  \nabla^2 g\big(\vect{x}_{i}(\alpha_i) \big) - \nabla^2 g\big(\vect{z} \big)      \big|\big|_{\mathrm{F}}  
 ||\vect{x}_{i+1} -\vect{x}_i ||^2\\
 &=||\vect{x}_{i+1} -\vect{x}_i ||^2 \cdot \Big(|\lambda| -   n \delta \cdot || \vect{z}-\vect{x}_{i}(\alpha_i)||    \Big) \\
 &\geq  ||\vect{x}_{i+1} -\vect{x}_i ||^2 \cdot \Big(|\lambda| -  \delta \cdot n^{\frac{3}{2}}    \Big),
 \end{align*}
where in step $(a)$ we used the inequality 
 \begin{align*}
 \Big|\mathrm{tr}\big( AB  \big) \Big| \leq ||A||_F ||B||_F,
 \end{align*} 
  $||\cdot||_F$ stands for Frobenius norm.
 
Combining the above relation with ~\eqref{eq5},~\eqref{eq6} and the fact that ${r^{*}_i}^2+(1-r^{*}_i)^2 \geq \frac{1}{2}$ we get
  \begin{align*}
 16\varepsilon \geq ||\vect{x}_{i+1} -\vect{x}_i ||^2 \cdot \Big(|\lambda|-  \delta \cdot n^{\frac{3}{2}}    \Big), 
  \end{align*} 
 which gives
  \begin{align*}
  4 \sqrt{\varepsilon} \cdot\big( B_{\vect{x} \rightarrow \vect{y}}(f)+1 \big)\geq ||\vect{x}-\vect{y}|| \cdot \sqrt{ \Big(|\lambda|-  \delta \cdot n^{\frac{3}{2}}    \Big)^{+} } .
  \end{align*} 
  Finally, rewriting the above inequality we get
   \begin{align*}
B_{\vect{x} \rightarrow \vect{y}} (f) \geq \frac{1}{4\sqrt{\varepsilon}} \cdot \sqrt{\Big ( |\lambda|- \delta \cdot n^\frac{3}{2} \Big)^+}-1.
 \end{align*} 
 
 Since $|\lambda|=\rho \big( \nabla^2 g(\vect{z})  \big)=\max \limits_{\vect{x} \in [0,1]^n}^{} \rho{\big(\nabla^2 g(\vect{x})\big)}$ and  $$|\Delta(g)(\vect{x})|=|\mathrm{tr}(\nabla^2 g(\vect{x}))| \leq \rho(\nabla^2 g(\vect{x})) \cdot n,  $$
 we obtain the desired result.
 \end{proof}

\subsection*{Proofs of Theorems~\ref{theorem1} and~\ref{theorem2}}
Propositions~\ref{proposition1} and ~\ref{proposition2} give Theorem~\ref{theorem1} and Propositions~\ref{proposition1} and ~\ref{proposition3} give Theorem~\ref{theorem2}.\qed
 \subsection*{Proof of Theorem \ref{theorem3}}\label{pfth4}
 Given a neural network $f$ we use $o$ to denote the output unit, $\mathrm{w}(u,v)$ to denote the weight of two connected units $u$ and $v$, and $b(u)$ to denote the bias of unit $u$. Furthermore, given $u \in \cH$ and $\vect{x} \in \mathcal{R}$ let $f_1^u(\vect{x})$ denote the output of unit $u$ when the input to $f_1$ is $\vect{x}$, and similarly for $f_2(\vect{x})$. Finally, define the maximum change in hidden layer $i$ as
 \begin{align*}
 \varepsilon_{i}(\vect{x})\defeq \max \limits_{u \in \cH^{i} }^{} \Big\{ |f^{u}_1(\vect{x})-f^{u}_2(\vect{x})| \Big \}.
 \end{align*}
 Fix $1\leq i\leq d_f-1$ and $v \in \cH^{i+1}$. Then, 
 \begin{align*}
 &\big|f^{v}_1(\vect{x})-f^{v}_2(\vect{x})\big| \\
 &=\Bigg|\sigma_1 \Big( \sum\limits_{u \in \bigcup\limits_{j=1}^{i} \cH^j } ^{}w(u,v)\cdot f^{u}_1(\vect{x})+b(v)\Big) \\
 &- \sigma_2 \Big( \sum\limits_{u \in \bigcup\limits_{j=1}^{i} \cH^j } ^{}w(u,v)\cdot f^{u}_2(\vect{x})+b(v)\Big)\Bigg|\\
 & \leq \varepsilon + \delta \cdot \Big ( \sum\limits_{u \in \bigcup \limits_{j=1}^{i}\cH^j }^{} |w(u,v)|\cdot\big|f^{u}_1(\vect{x})-f^{u}_{2}(\vect{x})\big|  \Big) \\
 & \leq  \varepsilon + \delta A\cdot \Big ( \sum\limits_{j=1}^{i} \sum\limits_{u \in \cH^{j}}^{} \big|f^{u}_{1}(\vect{x})-f^{u}_2(\vect{x})\big|  \Big) \\
 &\leq 
 \varepsilon + \delta A\cdot \Big ( \sum\limits_{j=1}^{i} \omega_j \varepsilon_{j}(\vect{x})  \Big) 
 \end{align*}
 where the first inequality holds since $\sigma_1$ is $\delta$-Lipschitz and assuming that $ ||\sigma_1 - \sigma_2||_{\infty}\leq \varepsilon$. 
 Hence we get the recursion between $\varepsilon_i$'s
 \begin{align}\label{recursion}
 &\varepsilon_{i+1}(\vect{x}) \leq  \varepsilon + \delta A \cdot \Big ( \sum\limits_{j=1}^{i} \omega_j \varepsilon_{j}(\vect{x})  \Big) 
 \end{align}
 for $1\leq i\leq d_f-1.$
 Now, since $\varepsilon_1(\vect{x}) \leq \big|\sigma_1(\vect{x})-\sigma_2(\vect{x})\big|$ we get $\varepsilon_1(\vect{x}) \leq \varepsilon$. From this initial condition and \eqref{recursion} 
 \begin{equation} \label{rel7}
 \varepsilon_{i+1}(\vect{x}) \leq  \varepsilon (1+\delta A\omega_1)(1+\delta A \omega_2)\cdots(1+\delta A \omega_i). 
 \end{equation}
 On the other hand we have
  \begin{align*}
 & |f_{1}(\vect{x}) - f_{2}(\vect{x}) |=\Big| \sum\limits_{u \in \bigcup\limits_{j=1}^{d_f} \cH^j } ^{}\mathrm{w}(u,o)\cdot \big( f^{u}_1(\vect{x})-
   f^{u}_{2}(\vect{x}) \big) \Big|\\
 & \leq A\big( \varepsilon_1(\vect{x})\omega_1 +\varepsilon_2(\vect{x})\omega_2+...+\varepsilon_d(\vect{x})\omega_{d_f} \big)
 \end{align*}
and from ~\eqref{rel7} we finally get
 \begin{align*}
 &|f_1(\vect{x}) - f_2(\vect{x}) | \\
&  \leq  \frac{\varepsilon}{\delta} \Big((1+\delta A\omega_1)(1+\delta A\omega_2)...(1+\delta A\omega_{d_f})-1\Big) \\
&\leq \frac{||\sigma_1 -\sigma_2||_{\infty}}{\delta} \Bigg( \Big(\delta\cdot A\cdot \omega_f  +1\Big)^{d_f} -1 \Bigg )
\end{align*}
 which gives the desired result.\qed

\bibliography{example_paper}
\bibliographystyle{icml2018}

\end{document}